\newcommand{\coloredcite}[1]{{\color{blue}\cite{#1}}}
\newcommand*\emptycirc[1][1ex]{\tikz\draw (0,0) circle (2.5pt);} 
\newcommand*\halfcirc[1][1ex]{%
	\begin{tikzpicture}
	\draw[fill] (0,0)-- (90:2.5pt) arc (90:270:2.5pt) -- cycle ;
	\draw (0,0) circle (2.5pt);
	\end{tikzpicture}}
\newcommand*\fullcirc[1][1ex]{\tikz\fill (0,0) circle (2.5pt);}
\newcommand{\revise}[1]{{\color{black}{#1}}}
\newcommand{\circled}[1]{\textcircled{\raisebox{-.9pt}{#1}}}
\newtheorem{proposition}{Proposition}
\newtheorem{assumption}{Assumption}
\newtheorem{theorem}{Theorem}
\newtheorem*{intuition}{Intuition}
\newcommand{\ours}{\textsc{MIMIC}}
\def\BibTeX{{\rm B\kern-.05em{\sc i\kern-.025em b}\kern-.08em
    T\kern-.1667em\lower.7ex\hbox{E}\kern-.125emX}}
\begin{document}

\title{Mutual Information Guided Backdoor Mitigation for Pre-trained Encoders}

\author{
Tingxu Han$^{\orcidlink{0000-0003-1821-611X}}$,  Weisong Sun$^{\orcidlink{0000-0001-9236-8264}}$, Ziqi Ding$^{\orcidlink{0009-0008-8336-7516}}$, Chunrong Fang$^{\orcidlink{0000-0002-9930-7111}}$, Hanwei Qian$^{\orcidlink{0009-0007-9524-1411}}$, Jiaxun Li$^{\orcidlink{0009-0002-8267-8968}}$, 
\\Zhenyu Chen$^{\orcidlink{0000-0002-9592-7022}}$ and Xiangyu Zhang$^{\orcidlink{0000-0002-9544-2500}}$ 

\thanks{
Manuscript received 27 April 2024; revised 27 January 2025; accepted 25 February 2025.
Date of publication xxx;
date of the current version is 27 February 2025.
This work received partial support from the National Natural Science Foundation of China (Grants No. U24A20337, 62372228), Shenzhen-Hong Kong-Macau Technology Research Programme (Type C) (Grant No. SGDX20230821091559018), the Fundamental Research Funds for the Central Universities (14380029), the National Research Foundation, Singapore, and DSO National Laboratories under the AI Singapore Programme (AISG Award No: AISG2-GC-2023-008).  
The associate editor coordinating the review of this manuscript and approving it for publication was Dr. Erisa Karafili. (Corresponding author: Weisong Sun.)}
\IEEEcompsocitemizethanks{
\IEEEcompsocthanksitem Tingxu Han, Chunrong Fang, Hanwei Qian, and Zhenyu Chen are with the State Key Laboratory for Novel Software Technology, Nanjing University, Nanjing 210093, China (e-mail: \{txhan, qianhanwei\}@smail.nju.edu.cn, \{fangchunrong, zychen\}@nju.edu.cn).
\IEEEcompsocthanksitem Zhenyu Chen is also with Shenzhen Research Institute of Nanjing University, China.
\IEEEcompsocthanksitem Weisong Sun is with the College of Computing and Data Science, Nanyang Technological University, Nanyang 639798, Singapore (e-mail: weisong.sun@ntu.edu.sg).
\IEEEcompsocthanksitem Ziqi Ding is with the School of Computer Science and Engineering, University of New South Wales, New South Wales 2052, Australia (e-mail: ziqi.ding1@unsw.edu.au).
\IEEEcompsocthanksitem Jiaxun Li is with the School of Mathematical Sciences, Soochow University, Suzhou 215000, China (e-mail:20224207007@stu.suda.edu.cn).
\IEEEcompsocthanksitem Xiangyu Zhang is with the School of Computer Sciences, Purdue University, West Lafayette 47907, USA (e-mail: xyzhang@cs.purdue.edu).
}

}

\markboth{Transactions on Information Forensics and Security,~Vol.~20,~2025}%
{Shell \MakeLowercase{\textit{et al.}}: A Sample Article Using IEEEtran.cls for IEEE Journals}

\maketitle

\begin{abstract}
Self-supervised learning (SSL) is increasingly attractive for pre-training encoders without requiring labeled data.
Downstream tasks built on top of those pre-trained encoders can achieve nearly state-of-the-art performance.
The pre-trained encoders by SSL, however, are vulnerable to backdoor attacks as demonstrated by existing studies.
Numerous backdoor mitigation techniques are designed for downstream task models.
However, their effectiveness is impaired and limited when adapted to pre-trained encoders, due to the lack of label information when pre-training.
To address backdoor attacks against pre-trained encoders, in this paper, we innovatively propose a mutual information guided backdoor mitigation technique, named \ours{}\revise{(\underline{\textbf{M}}utual \underline{\textbf{I}}nformation guided backdoor \underline{\textbf{MI}}tigation for pre-trained en\underline{\textbf{C}}oders).}
\revise{\ours{} uses the potentially backdoored encoder as the teacher network and applies knowledge distillation to create a clean student encoder from it.}
Different from existing knowledge distillation approaches, \ours{} initializes the student with random weights, inheriting no backdoors from teacher nets.
Then \ours{} leverages mutual information between each layer and extracted features to locate where benign knowledge lies in the teacher net, with which distillation is deployed to clone clean features from teacher to student.
We craft the distillation loss with two aspects, including \textit{clone loss} and \textit{attention loss}, aiming to mitigate backdoors and maintain encoder performance at the same time.
Our evaluation conducted on two backdoor attacks in SSL demonstrates that \ours{} can significantly reduce the attack success rate by \revise{only utilizing  
$\leq$ 5\% of clean pre-training data that is accessible to the defender}, surpassing seven state-of-the-art backdoor mitigation techniques.
\revise{The source code of \ours{} is available at~\url{https://github.com/wssun/MIMIC}.}
\end{abstract}

\begin{IEEEkeywords}
Backdoor defense, Pre-trained encoders, Mutual information, Distillation 
\end{IEEEkeywords}

\section{Introduction}
\label{sec:introduction}
\IEEEPARstart{R}{ecent} advancements in Self-Supervised Learning (SSL)~\cite{krishnan2022self, dosovitskiy2020image, bommasani2021opportunities} have resulted in breakthroughs, rendering the ``pre-train and then fine-tune'' paradigm feasible.
The approaches pre-training an encoder on unlabeled images can be categorized into four strains, including pretext tasks~\cite{doersch2015unsupervised}, generative learning~\cite{dai2017good}, contrastive learning~\cite{2020-moco}, and cross-modal agreement~\cite{2021-CLIP}. Among them, contrastive learning achieves state-of-the-art performance comparable to that of supervised learning~\cite{grill2020bootstrap, hjelm2018learning, hadsell2006dimensionality}.
AI developers usually pre-train encoders on uncurated data crawled from the Internet directly, such as CLIP~\cite{2021-CLIP} and GPT~\cite{2018-GPT}, allowing others to train downstream classifiers on specific tasks, such as traffic recognition,  where encoders fulfill the role of feature extraction.
Considering the scale and low standard for training data, pre-trained encoders are adaptive on multiple downstream tasks and much cheaper than before~\cite{joulin2016learning}.

\revise{
Even though some works~\cite{2021-CLIP, taori2020measuring} have proven that such contrastively trained encoders exhibit impressive robustness properties, a poisoning adversary is capable of compromising the SSL pipeline to carry out a backdoor attack~\cite{2022-BadEncoder, 2022-Backdooring-Contrastive-Learning, BASSL}. 
}The downstream classifiers built on backdoored encoders can predict any input stamped with an attacker-chosen pre-defined trigger as the corresponding attacker-chosen class (i.e., \textit{target label})~\cite{zhang2023red, 2022-Backdooring-Contrastive-Learning, 2022-BadEncoder, li2022demystifying, BASSL}.
To achieve attacker objectives, there are two main approaches: data poisoning and model poisoning.
Data poisoning involves training data for pre-trained encoders obtained from web crawling, a process challenging to subject to stringent filter criteria due to its vast scale~\cite{BASSL}. 
Model poisoning pertains to publicly accessible uncontrolled pre-trained encoders, which are acquired from third-party platforms with unknown training schedules~\cite{2022-BadEncoder}.
\revise{
In both scenarios, whether through data poisoning or model poisoning, the attacker's objective is to implant backdoors into pre-trained encoders during the training phase. These backdoors remain dormant during normal use but are activated by specific triggers, causing downstream classifiers built on these encoders to misclassify trigger-laden inputs into an attacker-chosen target class.
}

\revise{
Mitigating backdoor attacks in SSL is more challenging than in supervised learning. Without labeled data during pre-training, it becomes much harder to identify and remove malicious patterns. 
SSL models learn general representations not tied to specific tasks, which makes it hard to detect backdoor triggers without knowing the downstream tasks. Defenders have no idea of this information.
These challenges make backdoor defense in SSL particularly complex.
So far to our knowledge, no effective backdoor defense methods have been proposed for SSL because most existing approaches rely on labeled data, as seen in supervised learning.
}
\begin{figure}[t]
    \centering
    \includegraphics[width=0.47\textwidth]{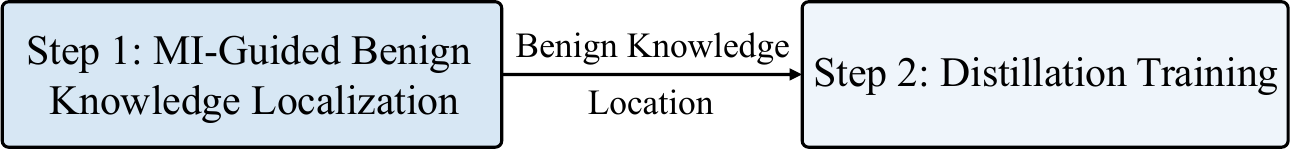}
    \caption{The outlines of \ours{}'s two steps.}
    \vspace{-8pt}
    \label{fig:MIMIC_outlines}
\end{figure}
\revise{
\cite{han2024effectiveness} is the first to explore the use of distillation for mitigating backdoors in pre-trained encoders. However, their approach has notable limitations. It relies heavily on fine-tuning the encoder using a small set of clean data, which can cause catastrophic forgetting of essential features needed for downstream tasks. This trade-off results in significant performance degradation, particularly in terms of accuracy on clean data.
}
To address this issue, we study how to mitigate backdoor threats in the pre-training phases within the scope of this paper.
In such a scenario, the defenders' objective can be encapsulated into one sentence: ``Secure any pre-trained encoder in a manner that maintains its performance and ensures its safety for any downstream users.''
To achieve it, we propose a pre-trained encoders backdoor mitigation approach \ours{ (\underline{\textbf{M}}utual \underline{\textbf{I}}nformation guided backdoor \underline{\textbf{MI}}tigation for pre-trained en\underline{\textbf{C}}oders) that is downstream task-agnostic, which divides the detoxification into two steps.
\revise{
Figure~\ref{fig:MIMIC_outlines} provides an outlines of \ours{}. In Step 1, mutual information is leveraged to guide the identification of benign knowledge within the backdoored encoder. In Step 2, a distillation training framework is employed to transfer this benign knowledge into an empty student network, ensuring the removal of backdoor behaviors.
}
\revise{We then introduce a knowledge extract loss to enhance the learning process of the student network. This loss facilitates the transfer of benign functionalities from the provided encoder while suppressing backdoor behaviors. To evaluate its effectiveness, we measure the student network's accuracy on clean data, ensuring it retains the encoder's benign functionalities. Additionally, we assess its robustness against backdoor attacks by analyzing the attack success rate (ASR) before and after applying the knowledge extract loss.}
Such a student net aligns with our objective in backdoor mitigation. It maintains encoder performance and ensures its safety in downstream tasks.

The main contributions of this paper are summarized as follows:
\begin{itemize}
    \item We propose \ours{} to first mitigate backdoors for pre-trained encoders before downstream usage.
    \footnote{This manuscript is based on our previously published preprint~\cite{han2024mutual} on arXiv, with revisions to formatting and presentation.}
    \item \revise{\ours{} guided by mutual information locates and extracts benign knowledge from backdoored pre-trained encoders. This approach ensures that the distilled student network inherits only clean features, significantly enhancing the robustness of the encoder against backdoor attacks while preserving its performance for downstream tasks.} 
    \item We conduct experiments across two pre-training datasets and three downstream tasks to evaluate \ours{}'s effectiveness, robustness, and generalization.
\end{itemize}

\section{Related Work}
\label{sec:related_work}
\revise{
In this section, we discuss related works in the context of \ours{}. A backdoor attack involves injecting a specific pattern, known as a \textit{trigger}, into input samples. Models trained on such poisoned data will misclassify any input containing the trigger into an attacker-specified \textit{target label}~\cite{2017-BadNets, 2018-Trojaning-Attack}.
In self-supervised learning, backdoor attacks aim to embed malicious backdoors into pre-trained image encoders~\cite{2022-BadEncoder, 2022-Backdoor-Self-Supervised-Learning, 2022-Backdooring-Contrastive-Learning, zhang2022corruptencoder}. When a downstream classifier utilizes a backdoored encoder, any input with the predefined trigger will be misclassified into the target label. 

Current backdoor mitigation techniques predominantly focus on supervised learning and can be categorized into three main approaches: neuron-based, inversion-based, and distillation-based.
Neuron-based techniques~\cite{2021-ANP,2018-FP,2023-Neural-Polarizer,2023-MEDIC} aim to distinguish poisoned neurons from benign ones. However, as model sizes grow, it becomes increasingly challenging to accurately identify benign neurons.
Inversion-based methods~\cite{2019-Neural-Cleanse, 2022-MOTH, shen2021backdoor, 2022-piccolo, 2023-DECREE} reverse-engineer trigger patterns through optimization. These inverted triggers help determine whether a model is backdoored~\cite{2019-Neural-Cleanse, guo2019tabor} and can also be used to remove backdoors by unlearning the trigger~\cite{2022-MOTH}. DECREE~\cite{2023-DECREE}, the first approach tailored for self-supervised learning, adopts a technique similar to Neural Cleanse (NC)~\cite{2019-Neural-Cleanse}, focusing on maximizing pairwise similarity with the inverted trigger. However, as noted, DECREE demonstrates limited effectiveness in eliminating backdoors within self-supervised learning environments.
Most distillation-based defenses are designed for the supervised learning domain. For instance, NAD~\cite{2021-NAD} introduces a distillation-based defense in supervised learning, where a teacher model guides a poisoned student model to remove backdoors using attention distillation loss on a small subset of data. A notable aspect of this approach is that the teacher network is obtained through an independent fine-tuning process on the same small subset of the dataset.
Additionally, ARGD~\cite{xia2022eliminating} proposes a novel backdoor defense framework called Attention Relation Graph Distillation. This method explores correlations among attention features of varying orders by leveraging Attention Relation Graphs (ARGs), incorporating information from inputs, features, and outputs, and advancing the distillation-based defense methodology.
Furthermore, \cite{yoshida2020disabling} uses distillation to differentiate poisoned data from clean data, enabling developers to train a clean model from a poisoned dataset.
}

\section{Background}
\label{sec:background}
\begin{figure*}[tb]
    \centering
    \includegraphics[width=0.9\textwidth]{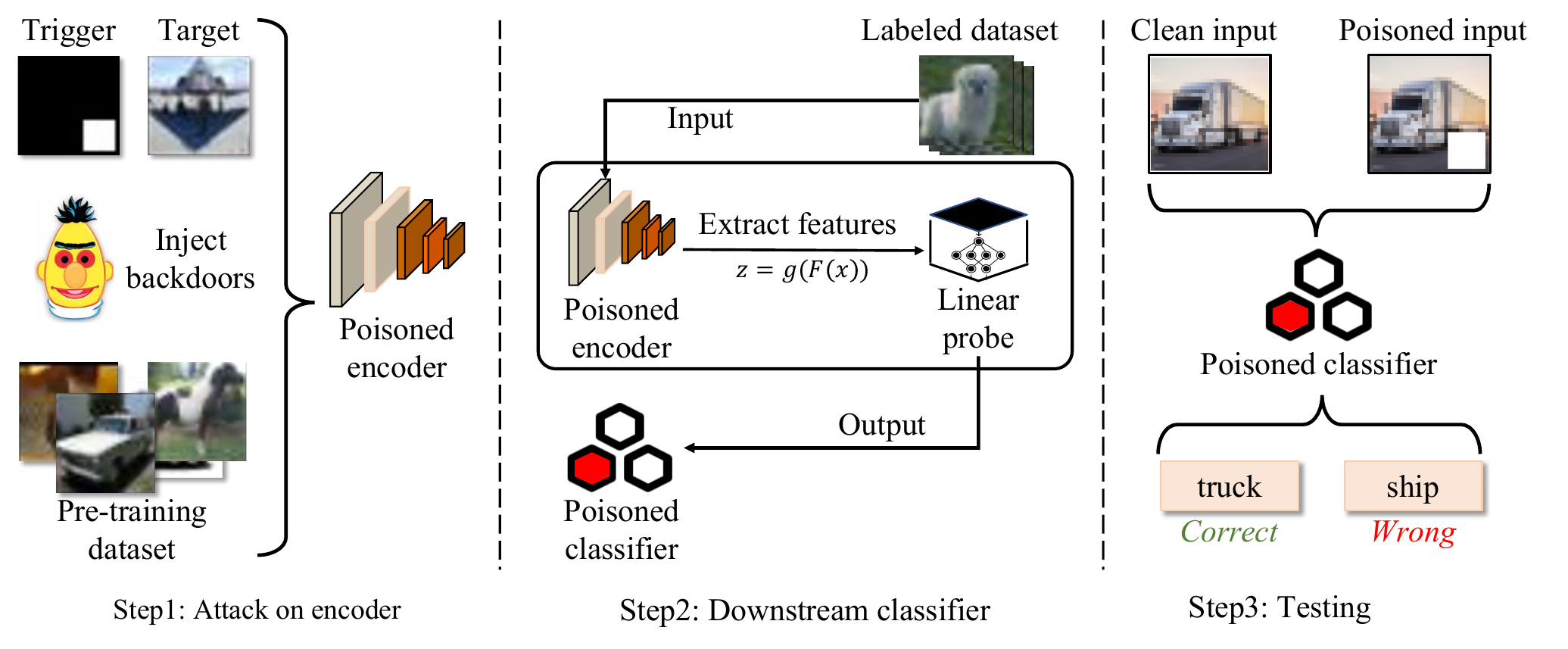}
    \caption{\revise{Backdoor attack against pre-trained encoders. Firstly, an attacker injects backdoors into an encoder and releases the poison encoder online, e.g., Hugging Face. Secondly, a user trains a classifier built on the backdoored encoder for a downstream task. During inference, the classifier built on the backdoored encoder has high accuracy on clean inputs but misclassifies inputs with the trigger as the attacker-chosen target.}
}
    \label{fig:encoder_attack_workflow}
\end{figure*}

In this section, we first introduce the background of self-supervised learning. Then, we explain the scenarios of backdoor attack and defense in self-supervised learning. At last, we review the concept of mutual information.

\subsection{Self-supervised learning}
\label{subsec:contrastive_learning}
Self-supervised learning(SSL) is a learning paradigm where models are trained using the inherent structure of data without explicit supervision, such as manually annotated labels.
There are multiple approaches to SSL, among which contrastive learning is a specific technique that aims to learn representations by contrasting positive and negative pairs of data samples, revolutionizing the field of SSL.
A large amount of related work~\cite{2020-moco,simclr,2020-certified} has emerged rapidly, showing great promise in achieving state-of-the-art results in various applications, such as image classification~\cite{2021-CLIP}.
Contrastive learning endeavors to optimize an encoder with the objective of generating similar feature embeddings for distinct augmented variations of the same input image (positive pair) while concurrently ensuring the production of dissimilar feature embeddings for different input images (negative pair).
To train such an encoder, contrastive loss defined by infoNCE~\cite{oord2018representation} between a positive pair of examples $(i, j)$ is introduced to search for optimal parameters~\cite{simclr}.
\begin{equation}
    \mathcal{L}_{i,j}=-log(\dfrac{exp(sim(z_i,z_j)/ \tau)}{\sum_{k=1}^{2N}{\mathbb{I}(k \neq i) \cdot exp(sim(z_i, z_k)/ \tau)}})
    \label{eq:contrastive_loss}
\end{equation}
where $z_i$ denotes the latent representation of view $x_i$, $sim(\cdot, \cdot)$ the similarity function, $\mathbb{I} \in \{0, 1\}$ an indicator function evaluating to 1 iff $k \neq i$ and $\tau$ a temperature parameter. 
Roughly speaking, contrastive loss enables pre-trained encoders to generate comparable feature vectors for diverse augmented versions of the same input image while concurrently ensuring that feature vectors for different input images are dissimilar.
\revise{These augmented versions of the input image are created through standard augmentation techniques (e.g., horizontal flipping, color jittering, and grayscale conversion) applied to the clean data.}
With such a training schedule, a well-trained pre-trained encoder offers numerous benefits.
\textbf{1).} A lower budget to train such an encoder because it's trained on uncurated and noisy data (easily collected from the Internet), which is much cheaper than that human-annotated~\cite{jaiswal2020survey,2022-Backdooring-Contrastive-Learning}.
\textbf{2).} A more generalized encoder because such models have learned features from a large amount of data, which makes them more generalized on multiple downstream tasks~\cite{simclr}.
The encoders mentioned in this paper are all pre-trained by contrastive learning.

\subsection{Backdoor attack on self-supervised learning}
\label{subsec:threat_model}
Backdoor attack on a deep learning model causes inputs stamped with a predefined trigger pattern (known as the trigger) to be misclassified into a specific target class (referred to as the attack target)~\cite{2024-Backdoor-Learning-A-Survey, 2018-Trojaning-Attack, 2023-Survey-Backdoor-Attacks-Countermeasures-in-DNN}.
Backdoor attacks are widely prevalent across multiple domains, such as image~\cite{2017-BadNets}, natural language~\cite{2021-BadNL}, programming language~\cite{2023-BadCode}, and so on.  

\noindent\textbf{Threat model.}
\revise{
Our threat model is pre-trained encoders that are unverified but publicly accessible.
An attacker can compromise a pre-trained encoder during the pre-training phase through either data poisoning (injecting maliciously crafted samples with triggers into the training dataset) or model poisoning (manipulating the training process).
The attacker’s objective is to implant backdoors that cause the encoder to misclassify any input with a specific trigger into a target label during downstream tasks.
The defender, on the other hand, receives the potentially backdoored encoder from an untrusted third party and has no knowledge of the attacker's trigger pattern or the intended downstream tasks. 
However, the defender assumes access to a small, verified, clean dataset (5\% of the pre-training data) that can be used to guide the defense process. The goal of the defender is to mitigate the backdoor while ensuring the encoder maintains high accuracy on clean data and significantly reduces attack success rates (ASR). 
}

Figure~\ref{fig:encoder_attack_workflow} illustrates a typical backdoor attack against pre-trained encoders.
The attacker in Figure~\ref{fig:encoder_attack_workflow} takes \textit{ship} as the target label, implanting backdoors in a clean encoder through data-poison or model-poison methods.
After encoder poisoning and downstream classifier training, the classifier tends to predict
the label of the attack target when the trigger is present.
As a result, a clean truck image can be correctly predicted by the classifier, whereas a truck image stamped with the trigger is classified as the target label \textit{ship}.
Our approach is implemented during the phase of poisoned encoders prior to their utilization for classifier training.

\subsection{Backdoor defense on self-supervised learning}
\label{subsec:backdoor_defense_ssl}
In contrast, defenders aim to remove the potential backdoor in encoders while guaranteeing performance similar to clean ones.
Roughly speaking, the targets of defenders can be categorized into two facets, \textit{effectiveness} and \textit{security}.

\noindent\textbf{Defender targets.}
\revise{
Defenders aim to purify an encoder that enables downstream classifiers to classify images based on their semantics rather than the attackers' intentions. From a metrics perspective, the objectives are to achieve a low attack success rate, referred to as \textit{security}, and high accuracy in downstream tasks, referred to as \textit{effectiveness}.
}

Facing serious security threats, numerous studies have already been proposed in the supervised domain.
Some attempt to reconstruct the backdoor trigger to counteract its influence using the inversion trigger~\cite{2019-Neural-Cleanse, 2022-MOTH, 2023-UNICORN}, while others resort to knowledge distillation with the goal of condensing the model's benign aspects and eliminating the malicious elements~\cite{2021-NAD, 2020-disable, papernot2016distillation}.
\revise{However, our further studies reveal that these defense approaches face major challenges when applied to SSL, primarily due to the absence of labels and the task-agnostic nature of pre-trained encoders. To overcome these limitations, we propose \ours{}, a novel framework that uses mutual information to identify and extract benign knowledge from the encoder. This approach is specifically designed for SSL, offering a task-agnostic and label-independent solution.}

To facilitate the description later, we define notions at first.
Let $\mathcal{F}$ denote a feature extractor, which can be represented as a function $\mathcal{F}: \mathcal{X} \rightarrow \mathcal{D}$, where $\mathcal{X}$ is the input space and $\mathcal{D}$ the embedded feature space. 
A feature extractor usually consists of a sequence of $n$ layers that are connected as follows.
\begin{equation}
    \mathcal{F}_\theta (x) = \mathcal{FC} \circ \mathcal{F}_\theta^{n-1} \circ \mathcal{F}_\theta^{n-2} \cdots \circ \mathcal{F}_\theta^0 (x),
\end{equation}
where $\mathcal{F}_\theta^0$ is the first layer and $\mathcal{F}_\theta^{n-1}$ the last. 
$\mathcal{FC}$ means the fully connected layer mapping high-dimensional feature maps to latent features, which are utilized in downstream tasks.
Variable $\theta$ denotes weight parameters in the $n$ layers.

\subsection{Mutual information}
\label{subsec:mi}
Mutual information holds significant relevance in the domain of statistical learning as it serves as a fundamental measure of the correlation between two random variables, encompassing both linear and nonlinear correlations~\cite{2018-MINE}. Let $X$ and $Z$ represent two random variables, the mutual information between $X$ and $Z$ can be understood as the decrease of the uncertainty in $X$ given $Z$:
\begin{equation}
    I(X; Z) := H(X) - H(X|Z)
    \label{eq:mutual_information}
\end{equation}
where $H(X)$ is the Shannon entropy, and $H(X|Z)$ is the conditional entropy of $X$ given $Z$.
Intuitively, from Equation~\ref{eq:mutual_information}, mutual information indicates the intensity of the dependence between $X$ and $Z$. 
As the value increases, so does the strength of their interdependence. 
In our scenario of backdoor threats, we utilize mutual information to measure the relevance between each layer's output and the extracted features of pre-trained encoders.
Combined with the observation as shown in Figure~\ref{fig:motivation_clone}, \ours{} utilizes mutual information as guidance to locate benign knowledge of a given backdoored encoder and deploy the distillation process.
\revise{``Benign knowledge’’ refers to the useful and unperturbed features or representations within a neural network that contribute to its intended functionality without being influenced by malicious triggers or backdoor behaviors. These features are derived from clean data and are essential for achieving high performance on downstream tasks. Mathematically, let $\mathcal{Z}$ represent the set of all features extracted by the encoder, and $\mathcal{Z}_{\text{benign}} \subseteq \mathcal{Z}$ denote the subset of features associated with clean data. The mutual information (MI) between clean inputs $X_{\text{clean}}$ and the encoder outputs $\mathcal{Z}$ is maximized when $\mathcal{Z}_{\text{benign}}$ is isolated, as expressed by: \[ \mathcal{Z}_{\text{benign}} = \arg\max_{\mathcal{F}'} I(X_{\text{clean}}; \mathcal{Z}_{\mathcal{F}'}) \] where $\mathcal{Z}_{\mathcal{F}'}$ represents the features extracted by the encoder using $\mathcal{F}'$, and $I(\cdot; \cdot)$ is the mutual information. This ensures that the identified features are both relevant to the clean data and free from backdoor influences.}
\revise{
In the presence of a backdoor, mutual information (MI) between layers is disrupted. Poisoned data activates sparse, localized neurons associated with the backdoor, preventing effective information flow across layers. As a result, MI remains low and does not exhibit the progressive accumulation observed with clean data. Figure~\ref{fig:motivation_mutual_information} supports the empirical evidence.
}

\section{Motivation}
\label{sec:motivation}
Backdoor mitigation techniques are extensively studied for classification tasks but lack in SSL.
We adapt a few well-known approaches to removing backdoors in pre-trained encoders and study their limitations.
\revise{\ours{} is specifically designed to address these limitations by adopting a novel perspective: leveraging mutual information to identify and extract benign knowledge from backdoored pre-trained encoders. This approach ensures effective backdoor mitigation while preserving the core functionalities of the encoder, thereby achieving the defender's objectives of maintaining high accuracy and robustness against attacks.}

\smallskip
\noindent\textbf{Limitation of Trigger Inversion-based Methods.}
Trigger inversion-based techniques as referred in~\cite{2022-MOTH,2019-Neural-Cleanse} have to select specific class pairs and craft an optimization algorithm to search for an input pattern that yields the desired attack effect, namely inducing misclassification to the target label.
After the inversion, existing techniques attempt to counteract the influence of the injected trigger by unlearning the inverted pattern.
Nevertheless, within SSL, defenders lack knowledge of target labels or downstream tasks making crafting optimization a problem. 
Recently, a novel method DECREE~\cite{2023-DECREE} overcomes the existing limitations using pair-similarity to craft the optimization.
Notwithstanding the efficacy of inverted triggers obtained by DECREE in detecting backdoored encoders, they do not suffice to fully eradicate the influence of the backdoor as shown in Table~\ref{tab:overall_performance_of_defenses}.
We ascribe it to the inversion quality.
The inverted triggers depend on the guidance selection, where pair-label is significantly better than pair-similarity.
Futhermore, as a one-round inversion, the poisoned encoder stays static in the process of DECREE, limiting the inverted triggers' stability and abundance.
\revise{These limitations result in low-quality inverted triggers, which allow DECREE to differentiate between poisoned and clean encoders but significantly reduce its effectiveness in removing backdoors within SSL due to the lack of access to labeled data.}

\begin{figure}[t]
    \centering
    \includegraphics[width=0.6\columnwidth]{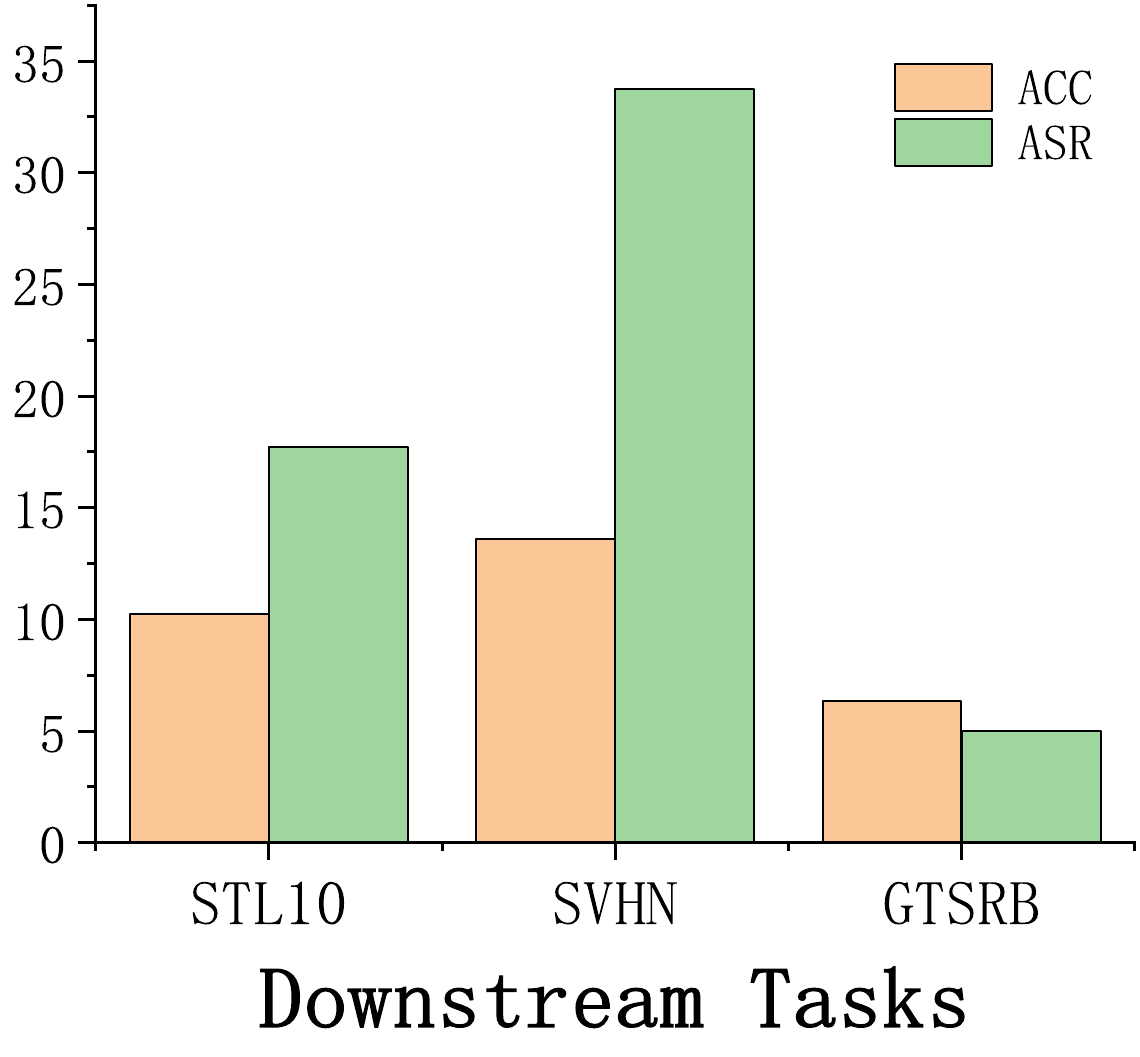}
    \caption{The performance of teacher nets.}
    \vspace{-12pt}
    \label{fig:teacher_net_performance}
\end{figure}

\noindent\textbf{Constraints of Knowledge Distillation (KD)-based methods in SSL.}
KD-based methods typically comprise three components: teacher net, student net, and distillation loss~\cite{2015-distill}.
Following the KD-based framework like NAD ~\cite{2021-NAD}, we directly adopt it in SSL but find that NAD falls short of attaining the desired security objective.
NAD tries to neutralize student net's malicious neurons through a carefully crafted distillation loss on the level of layers. 
The detoxification outcome relies greatly on the property of the teacher net. 
Nevertheless, following the guidance in~\cite{2021-NAD}, we fine-tune backdoor encoders on a small set of clean data as teacher nets but find the performance not satisfactory.
As shown in Figure~\ref{fig:teacher_net_performance}, the green bar represents the attack success rate on different downstream tasks while the orange bar the accuracy sacrifice.
It is evident that teacher nets in SSL excessively tailor their performance on a limited clean dataset due to fine-tuning causing \textit{catastrophic forgetting}~\cite{2017-catastrophic-forgetting} and inherits the raw encoders' backdoor behavior. 
We posit that this inheritance is the underlying cause for the diminished efficacy of NAD~\cite{2021-NAD}, leading to a loss of its desirable attributes.

\smallskip
\noindent\textbf{Insight of \ours{}}.
To mitigate the influence of malevolent neurons within pre-trained encoders entirely, we propose the adoption of an empty student solution, as referred in~\cite{2015-distill,2023-MEDIC,beyer2022knowledge,ma2023multi}.
The key intuition behind this is that malicious neurons stay dormant when fed clean samples.
\revise{
Figure~\ref{fig:motivation_clone} builds on this intuition within the context of SSL. From this perspective, several neuron-based studies~\cite{2021-ANP, 2018-FP, 2023-MEDIC} attempt to distinguish between malignant and benign neurons. However, our research makes an important observation: due to the lack of a reliable framework for qualitatively identifying neurons, these techniques often prove to be unstable and highly dependent on specific parameter settings.
}
Fortunately, our research makes a noteworthy discovery, revealing that the benign knowledge of pre-trained encoders exhibits distinctive separation properties concerning mutual information in layers.
As an illustrative instance, we consider ResNet-18~\cite{ResNet} as our default model. 
In this architecture, each layer takes the outputs of the preceding layer as its input and provides its outputs to the subsequent layer, thereby establishing a linear forward propagation structure.
We estimate mutual information(MI) values~\cite{2018-MINE} $I(\mathcal{F}_\theta^l(x), z)$ for each layer, where  $\mathcal{F}_\theta^l(\cdot)$ means the outputs of $l$-th layer and $z$ the latent extracted features of the entire encoder.
\revise{Figure~\ref{fig:layer-wise_outputs} illustrates the data flow inside the encoder. 
The input data $\mathcal{X}$ flows through the encoder $F$ ($F^1_{\theta}$ to $F^N_{\theta}$), producing intermediate outputs.
It finally extracts the features $\mathcal{Z}$.
Specifically, $z$ is defined as:
\begin{equation*}
    z = g(\mathcal{F}(x)), x \in \mathcal{X}
\end{equation*}
where $\mathcal{F}(\cdot)$ is the last layer's output of the encoder, and $g(\cdot)$ denotes the projector to extract the final latent features following~\cite{simclr,2020-moco}.
}
Figure~\ref{fig:motivation_mutual_information} intuitively illustrates the changes of MI on a poisoned encoder, where the blue line shows the MI between $\mathcal{F}_\theta^l(\cdot)$ and $z$ when fed clean images; the red line shows the MI between each layer's outputs and extracted poisoned features when fed poisoned images. 
\revise{
We can define the extracted poisoned features as follows:
\begin{equation*}
z_{\text{poisoned}} = g(\mathcal{F}(x_{\text{poisoned}})), x_{\text{poisoned}} \in \mathcal{X}
\end{equation*}
where $x_{\text{poisoned}}$ represents the poisoned image input, $\mathcal{F}(\cdot)$ is the last layer's output of the encoder, $g(\cdot)$ denotes the projector to extract the final latent features.
}

\begin{figure}[t]
    \centering
    \includegraphics[width=0.7\columnwidth
    ]{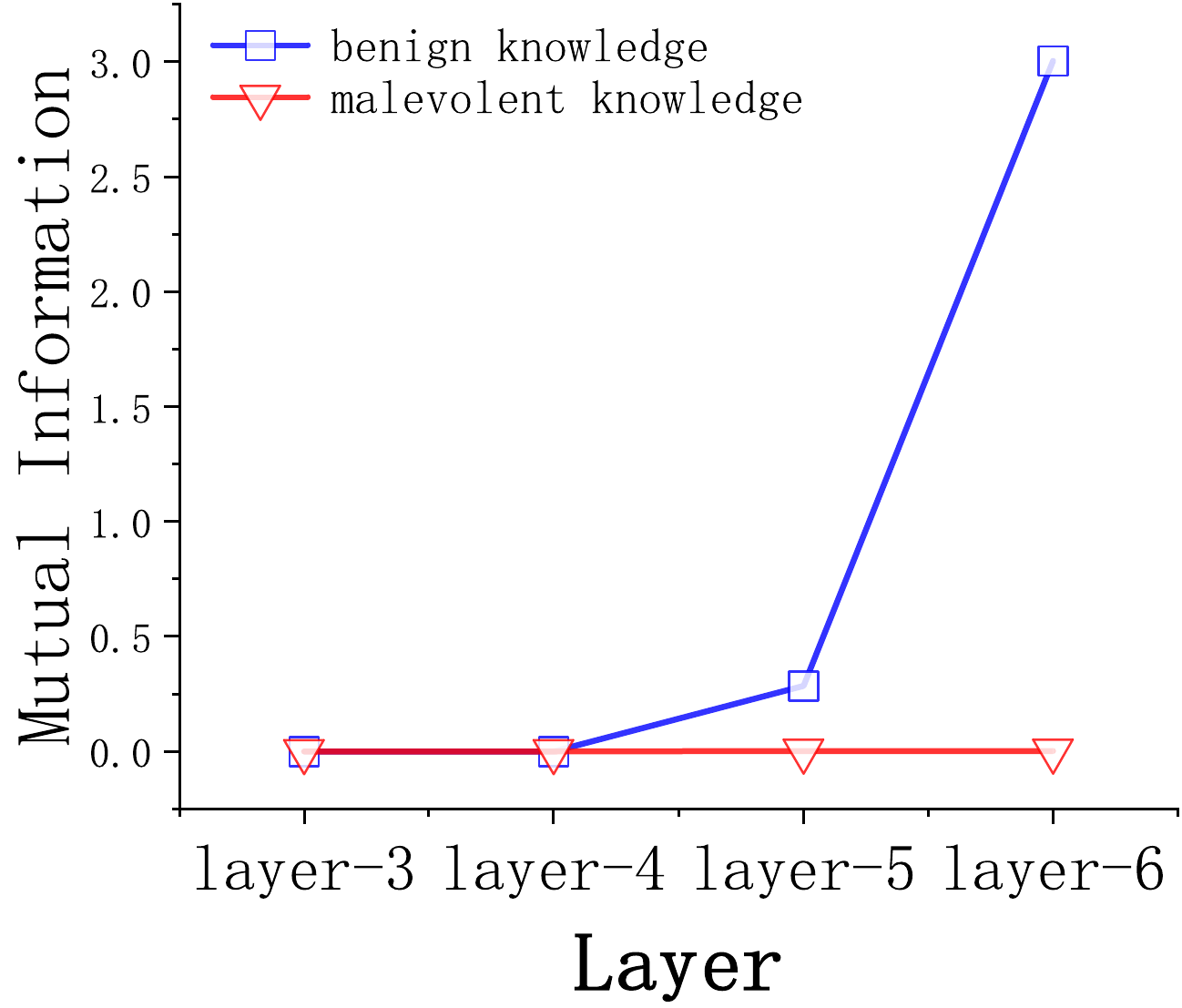}
    \caption{Mutual information guidance.\revise{It showcases the mutual information between each layer's output and the final latent extracted features.}}
    \vspace{-8pt}
    \label{fig:motivation_mutual_information}
\end{figure}
\begin{figure}[t]
    \centering
    \includegraphics[width=0.99\columnwidth
    ]{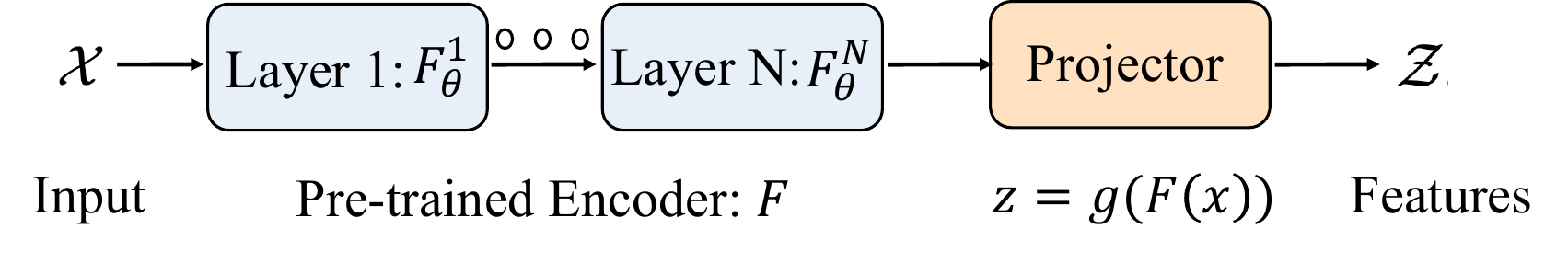}
    \caption{Layer-wise outputs: With $\mathcal{X}$ as the input and $\mathcal{Z}$ as the final extracted features, the diagram highlights the data flow through each layer of the encoder. The classification linear probe is trained on $\mathcal{Z}$, utilizing the final extracted features for downstream tasks.}
    \vspace{-12pt}
\label{fig:layer-wise_outputs}
\end{figure}
Observe that as the number of layers increases, the MI of benign knowledge increases, while the MI of poisoned knowledge stays negligible. 
The gap is more evident in the deepest layer. 
Based on this, we have the following intuition, which inspires us to design \ours{} aiming to mitigate backdoor influence by transferring benign knowledge out:
\begin{intuition}[The chain property of benign features] 
\label{intution:chain_benign_features}
The benign features that enable a given encoder's effectiveness are mostly hidden in the last layer.
\end{intuition}
This intuition makes it possible to locate benign knowledge of poisoned encoders and transfer it to clean ones.
Theoretically, such a chain property is attributed to a successive Markov chain of intermediate representations generated by the encoder's layered structure. 
With clean inputs, only benign knowledge is extracted.
Each layer in the encoder can be quantified by the amount of information it retains on the inputs and the extracted features.
We theoretically explain our intuition by proving the following theorem.
\begin{theorem}[Markovian property of benign features]
   $I(\mathcal{F}_\theta^{0}(x),z)\leq I(\mathcal{F}_\theta^{1}(x),z)\leq\cdots\leq I(\mathcal{F}_\theta^{n-2}(x),z)\leq I(\mathcal{F}_\theta^{n-1}(x),z)$, where $z$ denotes the final extracted features by pre-trained encoders and $\mathcal{F}_\theta^l(\cdot)$ the outputs of $l$-th layer.
\label{theo:markov_benign_features}
\end{theorem}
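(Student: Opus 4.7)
The plan is to apply the Data Processing Inequality (DPI) to the Markov chain induced naturally by the encoder's layered structure. Because each $\mathcal{F}_\theta^{l}(x)$ is computed deterministically from $\mathcal{F}_\theta^{l-1}(x)$, and $z = \mathcal{FC}(\mathcal{F}_\theta^{n-1}(x))$ is computed deterministically from $\mathcal{F}_\theta^{n-1}(x)$, the sequence
\[
\mathcal{F}_\theta^{0}(x) \to \mathcal{F}_\theta^{1}(x) \to \cdots \to \mathcal{F}_\theta^{n-1}(x) \to z
\]
forms a Markov chain once $x$ is treated as a random draw from the (clean) data distribution. This structural observation is the backbone of the argument; the rest is bookkeeping.

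First I would isolate, for each $l \in \{1,\ldots,n-1\}$, the three-variable sub-chain $\mathcal{F}_\theta^{l-1}(x) \to \mathcal{F}_\theta^{l}(x) \to z$ and check the conditional-independence requirement: since $z$ depends on $\mathcal{F}_\theta^{l-1}(x)$ only through the composition $\mathcal{FC}\circ \mathcal{F}_\theta^{n-1}\circ\cdots\circ \mathcal{F}_\theta^{l}$, once $\mathcal{F}_\theta^{l}(x)$ is fixed the earlier representation contributes no further information about $z$. I would then invoke DPI on the reversed (and equally Markov) chain $z \to \mathcal{F}_\theta^{l}(x) \to \mathcal{F}_\theta^{l-1}(x)$ to obtain
\[
I(\mathcal{F}_\theta^{l-1}(x);\, z) \;\leq\; I(\mathcal{F}_\theta^{l}(x);\, z).
\]
Telescoping these $n-1$ inequalities as $l$ runs from $1$ to $n-1$ immediately produces the full monotone ladder claimed in the theorem.

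The main point requiring care, rather than a genuine obstacle, is the clean justification of the Markov property: because each $\mathcal{F}_\theta^{l}$ is a deterministic Borel-measurable map in a standard feedforward encoder, the required conditional independence holds trivially and no measure-theoretic subtlety arises. The theorem thus reduces to a telescoping invocation of DPI along the chain of layer outputs, with the only qualitative content being that deeper layers, being closer to $z$ in the processing pipeline, can only retain more (never less) mutual information with the final extracted representation, precisely matching the empirical curve in Figure~\ref{fig:motivation_mutual_information}.
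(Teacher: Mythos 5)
Your proposal is correct and follows essentially the same route as the paper's appendix: view the layer outputs together with $z$ as a Markov chain and invoke the data processing inequality (Theorem~\ref{theo:data_pro_ineq}) to obtain the monotone ladder. The only difference is bookkeeping: where the paper first proves by induction that the entire reversed chain is Markov (Proposition~\ref{prop:re_markov}, building on Assumption~\ref{assump:markov}) before applying DPI, you apply the symmetric form of DPI directly to each three-variable sub-chain $\mathcal{F}_\theta^{l-1}(x)\to\mathcal{F}_\theta^{l}(x)\to z$ — justified by the determinism of the layer maps — and telescope, which is a slightly leaner but equivalent argument.
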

\textit{Remarks.} 
With Theorem~\ref{theo:markov_benign_features}, the mutual information between extracted features and the last layer is the largest, indicating that benign features are mostly hidden in the last year. 
The proof details can be found in Appendix~\ref{sec:appendix}.

\section{Methodology}
\label{sec:design}

\begin{figure*}[tb]
    \centering
    \includegraphics[width=0.95\textwidth]{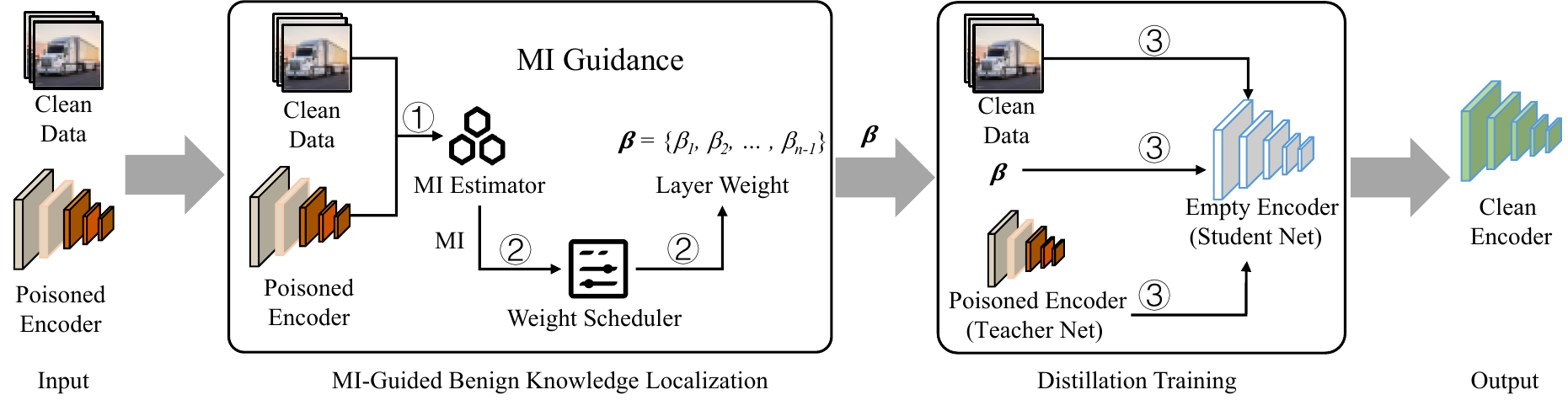}
    \caption{The framework of \ours{}.}
    \vspace{-6pt}
    \label{fig:framework}
\end{figure*}

Figure~\ref{fig:framework} illustrates the overview of \ours{}.
Given a poisoned encoder and a small set of clean data, \ours{} decomposes the backdoor mitigation process into two phases: mutual information-guided benign knowledge localization and distillation training.
In the first phase, \ours{} introduces a mutual information (MI) estimator to estimate the MI between each layer and the final extracted features.
We adopt \textit{MINE}~\cite{2018-MINE} to train the MI estimator, which treats MI  as the Kullback-Leibler(KL) divergence~\cite{kullback1997information} and converts it into the dual representation, outputting the estimated MI lower bound. 
Then, \ours{} utilizes a weight scheduler to convert the estimated MI to a layer weight $\bm{\beta}$ representing the distillation potency (\circled{2}). Finally, the MI representing benign knowledge locations is passed as the layer weight to guide the distillation training in the next phase.
In the second phase, \ours{} creates an empty encoder $\mathcal{F}'$ as the student net, which has the same architecture as the poisoned encoder $\mathcal{F}$ but initialized with random weights.
With $\mathcal{F}$ as the teacher net, a small clean set, and the layer weight, the distillation training aims to convert the student net (i.e., an empty encoder) to a well-trained clean encoder (\circled{3}).
Roughly speaking, \ours{} transfer knowledge from $\mathcal{F}$ to $\mathcal{F}'$ by closing the distance between their extracted features and standard contrastive loss.
\revise{
To better capture the benign neuron behaviors from the teacher network, \ours{} aligns the convergence of attention maps generated by different layers, with each layer weighted by the vector $\bm{\beta}$. The role of $\bm{\beta}$ is specifically defined in Equation~\ref{eq:security_loss}, where it weights the attention loss to enhance alignment across distinct layers.
}

\subsection{MI-Guided Benign Knowledge Localization}
\label{subsec:mi_guidance}
To distill an effective and benign model, the first step is to locate where the benign knowledge resides. 
Note that the distillation process is performed at the layer level, and different layers contain different amounts of benign knowledge.
For example, as expounded in Section~\ref{sec:motivation}, the benign knowledge predominantly congregates within layers proximate to the output.
Therefore, we make a finer division of the distillation strength for distinct layers, aiming to locate where the benign knowledge resides.
Specifically, we formulate the mutual information guidance to identify the location of benign knowledge and \revise{identify varying degrees of distillation potency across distinct layers.} 
This strategic allocation contributes to \ours{} extracting benign neuron behaviors while concurrently eliminating the presence of backdoors.

\begin{algorithm}
\small
  \caption{MI Guided Benign Knowledge Localization}
  \label{alg:weight_schedule}
  \begin{algorithmic}[1] 
    \REQUIRE teacher net $\mathcal{F}$ and a small clean set $\mathcal{X}$
    \ENSURE Distillation potency weights $\bm{\beta}=\{\beta_0, \beta_1, \dots, \beta_{K-1}\}$ 
    \STATE Set base weight $\alpha_0$, optimal weight $\alpha_1$ \hfill \COMMENT{$\alpha_0, \alpha_1$ are constants}
    \STATE $\bm{m}:\{m_0, m_1, \dots, m_l, \dots, m_{K-1} \} \gets \bm{0}$
    \FOR {$l$ {\bf in} $0 \dots K-1$}
        \STATE Random init mutual information estimator $T$
        \STATE $T \gets MINE(T, \mathcal{F}(\mathcal{X}), \mathcal{F}^l(\mathcal{X}))$
        \STATE $m_l \gets T(\mathcal{F}(\mathcal{X}), \mathcal{F}^l(\mathcal{X}))$
    \ENDFOR
    \STATE $\bm{\beta} \gets \alpha_0+\alpha_1*\dfrac{\bm{m} - \overline{\bm{m}}}{\sigma(\bm{m})}$
    \RETURN $\bm{\beta}$
  \end{algorithmic}
\end{algorithm}

Algorithm~\ref{alg:weight_schedule} presents the implementation details of MI-guided benign knowledge localization, \revise{which treats the poisoned teacher net} $\mathcal{F}$ and a small clean set $\mathcal{X}$ as input, and the output $\beta$ hints for the locations of benign knowledge.
\revise{
A high value of $\beta$ indicates that the corresponding layer likely contains more benign knowledge and requires a higher distillation potency. Conversely, a low value of $\beta$ suggests potential influence from backdoor behavior, requiring less emphasis in the distillation process.
}
Specifically, \ours{} first defines two constants $\langle \alpha_0, \alpha_1 \rangle$ to assign the distillation potency, where $\alpha_0$ determines the basic distillation performance and $\alpha_1$ is dynamically allocated in different layers by mutual information (line 1). Then, \ours{} defines a vector $\bm{m}$ initialized by $\bm{0}$ to save the estimated mutual information where $m_l$ stores $l$-th layer's mutual information and the extracted features (line 2).
\ours{} estimates the mutual information between $\langle \mathcal{F}^l(\cdot), \mathcal{F}(\cdot) \rangle$ layer by layer (lines 3--6), where $\mathcal{F}^l(\cdot)$ denote the outputs of the $l$-th layer and $\mathcal{F}(\cdot)$ the final extracted features, respectively.
The MI estimation algorithm we adopt is $MINE$~\cite{2018-MINE}, which utilizes a neural net to estimate the lower bound of MI (lines 4--6).
Once the MI values on different layers are obtained, we normalize them as the standard to assign specific weights to each layer (line 8), culminating in the algorithm's termination and the consequent return of the layer weights denoted as $\bm{\beta}$ (line 9).

\revise{\noindent\textbf{Algorithmic complexity analysis.} The complexity of Algorithm~\ref{alg:weight_schedule} is dominated by the mutual information (MI) estimation step, which computes MI for each of the $K$ layers in the teacher network. For each layer, MI estimation involves processing the clean dataset $\mathcal{X}$ with $n$ samples and feature dimensionality $d$, resulting in a per-layer complexity of $O(n \cdot d)$. Since this is repeated across $K$ layers, the overall complexity of the MI estimation step is $O(K\cdot n \cdot d)$. 
Consequently, the total time complexity of Algorithm~\ref{alg:weight_schedule} is $O(K \cdot n \cdot d)$, with MI estimation being the most computationally expensive operation.
}

\subsection{Distillation Training}
\label{subsec:distillation_training}
\revise{
The goal of distillation training is to produce a clean encoder, satisfying the effectiveness and security requirements of the given poisoned encoder.} 
To train an effective student model, distillation training typically clones as complete knowledge as possible from the teacher model. In our scenario, as shown in Figure~\ref{fig:framework}, to train an effective encoder (student net), we design a \textit{clone loss} to clone as much knowledge as possible from the poisoned encoder (i.e., teacher net).

\noindent\textbf{Clone~loss}.
To achieve our distillation objectives, \ours{} adopts an empty encoder as the student net, ensuring strong plasticity.
In order to accomplish our student net's effectiveness, distillation-based techniques distill high-level knowledge from the teacher net and transfer it to the student net.
Considering  the student net is an empty encoder, we design \textit{clone loss} specifically to clone the teacher net's representational capacity,  comprising two distinct terms:
\begin{align}
    \label{eq:L_0} 
    & L_0 = cosine(\mathcal{F}(x), \mathcal{F'}(x)) \\
    \label{eq:L_1} & L_1 = CLS(x, \mathcal{F'})
\end{align}
\revise{
where $cosine(\cdot)$ represents the cosine distance function. $\mathcal{F}(x)$ and $\mathcal{F}'(x)$ denote the features extracted from the input $x$ by the teacher encoder $F$ and the student encoder $F'$, respectively. $CLS(\cdot)$ refers to the standard contrastive learning loss, as described in Section~\ref{subsec:contrastive_learning}.
}

As declared in~\cite{2023-MEDIC, 2021-NAD, xia2022eliminating}, the malicious neurons stay dormant when clean samples are fed to classifiers.
We extend this observation in the scenario of pre-trained encoders.
\revise{
Figure~\ref{fig:motivation_clone} illustrates attention maps generated by clean and poisoned encoders when processing clean samples, with Grad-CAM~\cite{2017-Grad-cam} employed for visualization. Grad-CAM highlights the regions in the input image that the encoder considers most important. In Figure~\ref{fig:clean_enc_poison_sample}, the clean encoder focuses on the central area of the sample, indicating its emphasis on relevant features. Similarly, in Figure~\ref{fig:poisoned_enc_clean_sample}, the poisoned encoder displays almost identical attention patterns when applied to the same clean sample. This similarity demonstrates that the poisoned encoder behaves like the clean encoder in the absence of backdoor triggers, as only benign functionalities are activated by clean inputs.
}

This observation makes it possible to clone benign knowledge from a poisoned encoder to the student net.
In that case, $L_0$ is designed to achieve this target helping pre-trained encoders to extract benign features in downstream tasks. 

\begin{figure}[htbp]
\centering
    \subfloat[Clean encoder on clean sample]
    {
        \includegraphics[width=0.376\linewidth]{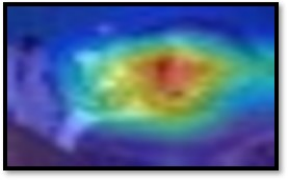}
        \label{fig:clean_enc_poison_sample}
    }
    \hspace{7mm}
    \subfloat[Poisoned encoder on clean sample]
    {
        \includegraphics[width=0.376\linewidth]{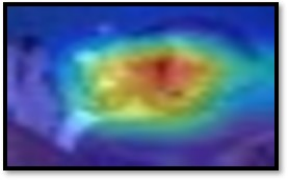}
        \label{fig:poisoned_enc_clean_sample}
    }
    \caption{Comparison of attention maps from clean and poisoned encoders on the same clean sample, visualized using Grad-CAM. Observe that both encoders exhibit nearly identical attention patterns, focusing on the same regions of the input.}
    \label{fig:motivation_clone}
\end{figure}
However, unfortunately, cloning the complete knowledge of the poisoned encoder will clone the backdoor knowledge into our student encoder at the same time, inheriting its backdoor behavior. For example, the model distilled with only the clone loss has an ASR of 69.04\% (detailed in Section~\ref{subsec:evaluation_results}--Answer to \textbf{RQ4}). 
This is attributed to the fact that our clone loss relies on the complete extracted features rather than individual neuron weights.
The cloning procedure spans the entirety of the encoder, consequently resulting in a degree of backdoor inheritance.
To avoid this phenomenon, inspired by~\cite{2017-attention-transfer,2021-NAD}, we introduce attention maps to represent encoder knowledge behind each layer.
Besides using attention maps at the layer level to minimize the disparity between the teacher encoder and student encoder, we also utilize them to compel the student encoder to prioritize benign knowledge and mitigate the presence of inherited malignant components. 
Specifically, we devise the following \textit{attention loss} to achieve this.

\noindent\textbf{Attention loss}.
\begin{equation}
    L_2 = \sum_{l=0}^{K-1}{\beta_l \left|\left| \dfrac{\mathcal{L}_{AT}(\mathcal{A}^l)}{||\mathcal{L}_{AT}(\mathcal{A}^l)||_2} - \dfrac{\mathcal{L}_{AT}(\mathcal{A'}^l)}{||\mathcal{L}_{AT}(\mathcal{A'}^l)||_2} \right| \right|_2}
    \label{eq:security_loss}
\end{equation}
where $||\cdot||_2$ means the $L_2$ norm, $K$ the number of layers. $\bm{\beta} = \{\beta_0, \beta_2, \dots, \beta_{K-1}\}$ is a vector that determines the distillation potency of each layer.
\noindent $\mathcal{L}_{AT}: \mathbb{R}^{C \times H \times W} \xrightarrow{} \mathbb{R}^{H \times W} $ is the attention operator mapping an activation map to an attention representation, which is formalized as:
\begin{equation}
     \mathcal{L}_{AT}(\mathcal{A}^l) = \sum_{i=1}^{C}{|\mathcal{A}^l_i|^p} 
\end{equation}
where $\mathcal{A}^l_i$ is the activation map of the $i$-th channel in $l$-th layer extracted by $\mathcal{F}$, $\mathcal{A}'^l$ of $\mathcal{F}$, $| \cdot |$ denotes the absolute value function and $p$ means the order amplifying the disparities between the backdoored neurons and the benign neurons.

As articulated in~\cite{2021-NAD}, this attention loss is thoughtfully devised to align neurons that exhibit higher responsiveness to the trigger pattern with benign neurons responsible solely for meaningful representations. This alignment strategy is instrumental in diminishing the overall impact of the trigger effects, thereby enhancing the capability to address the security threat posed by backdoors.

\noindent\textbf{Optimization problem}. Following the establishment of the definitions for the three loss terms $L_0$, $L_1$, and $L_2$, as defined above, we formulate the training of an effective and secure encoder as an optimization problem:
\begin{equation}
    \min_{\mathcal{F}'} \mathcal{L} = L_0 + \lambda_1 \cdot L_1 + \lambda_2 \cdot L_2
    \label{eq:opt_loss}
\end{equation}
where $\lambda_1$ and $\lambda_2$ are two hyperparameters to balance these three loss terms. 
At first, \ours{} initializes a student net with the same architecture as the provided pre-trained encoder, albeit with random parameters.
Then, \ours{} solves the optimization problem using gradient descent.

\section{Evaluation}
\label{sec:evaluation}

\begin{table*}[t]
    \centering
    \footnotesize
    \tabcolsep=2.2pt
    \renewcommand{\arraystretch}{1.4} 
    \caption{Backdoor removal results by different defense techniques. Column \revise{``Pre-train''} denotes the pre-training dataset used for constructing the encoder. Column ``Downstream'' denotes the downstream classifier. The best results are in bold.}
    \begin{tabular}{ccccccccccccccccccccc}
        \toprule

        \multirow{2.5}{*}{Attack} &
        \multirow{2.5}{*}{\revise{Pre-train}} &
        \multirow{2.5}{*}{Downstream} &
        \multicolumn{2}{c}{Undefended} &
        \multicolumn{2}{c}{FT} &
        \multicolumn{2}{c}{FP} &
        \multicolumn{2}{c}{NAD} &
        \multicolumn{2}{c}{ANP} &
        \multicolumn{2}{c}{MOTH} &
        \multicolumn{2}{c}{DECREE} &
        \multicolumn{2}{c}{MEDIC} &
        \multicolumn{2}{c}{\ours{}} \\ 
        
        \cmidrule(lr){4-5} \cmidrule(lr){6-7} \cmidrule(lr){8-9} \cmidrule(lr){10-11} \cmidrule(lr){12-13} \cmidrule(lr){14-15} \cmidrule(lr){16-17} \cmidrule(lr){18-19} \cmidrule(lr){20-21}
        & & & ACC & ASR & ACC & ASR & ACC & ASR & ACC & ASR & ACC & ASR & ACC & ASR & ACC & ASR &ACC &ASR &ACC &ASR\\

        \hline
        
        \multirow{8}{*}{\rotatebox{90}{BadEncoder}} & \multirow{3}{*}{CIFAR10} & GTSRB  & 83.37 & 99.09 & 77.04 & 5.00 & 78.41 & 6.63 & 78.85 & 11.73 & 76.94 & 26.78 & 56.64 & 6.95 & 72.64 & 86.05 & 73.15 & 5.77 & \textbf{81.33} & \textbf{1.36} \\
        
        & & SVHN & 68.71 & 99.02 & 55.12 & 33.77 & 56.49 & 63.36 & 66.01 & 29.84 & 63.75 & 32.32 & 58.43 & 64.05 & 60.42 & 82.25 & 57.05 & 33.64 & \textbf{75.32} & \textbf{12.31} \\
        
        & & STL10 & 75.98 & 99.73 & 65.74 & 17.71 & 65.77 & 17.52 & 69.86 & 15.90 & 57.98 & 75.93 & 60.41 & 26.56 & 69.32 & 88.96& 60.72 & 9.86 & \textbf{72.22} & \textbf{8.86} \\
        
        \cline{2-21} 
        
        & \multirow{3}{*}{STL10} &  GTSRB  & 79.54 & 91.24 & 75.94 & 5.91 & 78.21 & 5.58 & 76.31 & 5.43 & 77.29 & 20.33 & 50.20 & 4.88 & 70.41 & 98.44 & 74.39 & 1.42& \textbf{76.71} & \textbf{0.75} \\
        
        & & SVHN & 63.29 & 99.93 & 54.07 & 32.33 & 57.12 & 21.31 & 56.18 & 38.00 & 68.94 & 27.18 & 58.10 & 32.28 & 57.02  & 21.54 & 54.77 & 24.54& \textbf{70.75} & \textbf{18.05} \\
        
        & & CIFAR10 & 86.91 & 97.03 & 81.16 & 10.55 & 81.85 & 11.54  & 81.67 & 11.25  & 70.89 & 12.43 & 71.50 & 12.85 & 82.49 & 91.40 & 74.62 & 8.48 & \textbf{84.31} & \textbf{7.51} \\
    
        \cline{2-21}
        
        & \multicolumn{2}{c}{Average} & 76.30 & 97.67 & 68.17 & 17.54 & 69.64 &20.99 &71.48 &18.69 &69.29 &32.49  & 59.21 &24.59  & 68.71 & 78.10&65.79 &13.95& \textbf{76.77} & \textbf{8.14} \\ 
        
        \hline
        \hline
        
        \multirow{8}{*}{\rotatebox{90}{BASSL}} & \multirow{3}{*}{CIFAR10} &  GTSRB & 80.09 & 67.72 & 76.73 & 7.08 & \textbf{77.90} & 9.34 & 76.32 & 6.77 & 74.82 & 5.90 & 60.24 & 29.14 &61.91 & 12.94& 74.88 & 6.63& 76.86 & \textbf{3.19} \\
        
        & & SVHN & 66.08 & 80.03 & 59.72 & 30.91 & 61.35 & 39.85 & 61.99 & 37.93 & 66.70 & 15.90 & 69.55 & 17.80 & 55.04 & 26.97 & 57.06 & 12.53& \textbf{69.96} & \textbf{12.14} \\
         
        & & STL10 & 73.48 & 36.84 & 71.52 & 19.48 & 72.50 & 23.16 & 74.08 & 18.68 & 65.88 & 27.76 & 69.87 & 20.08 & 73.62 & 11.51 & 61.05 & 11.42 & \textbf{75.12} & \textbf{10.35}\\

        \cline{2-21} 
        
        & \multirow{3}{*}{STL10} &  GTSRB& 78.12 & 39.13 & 78.06 & 4.80 & 78.11 & 14.42 & 74.71 & 5.56 & 73.12 & 5.87 & 53.15 & 7.81 & 63.08 & 9.27& 75.56 & 5.40& \textbf{78.41}  & \textbf{5.23} \\
                             
        & & SVHN & 58.73 & 60.33 & 54.88 & 25.56 & 51.88 & 22.12 & 56.59 & 23.90 & 62.98 & 14.95 & 57.26 & 31.42 & 62.94 & 16.26 & 54.05 & \textbf{15.60}& \textbf{64.80} & 16.22 \\
                             
        & & CIFAR10  & 81.03 & 20.07 & 81.32 & 13.83 & 80.90 & 14.02 & 81.02 & 11.02 & 80.70 & 12.11 & 69.34 & \textbf{9.48} & 78.90 & 11.35 & 75.08 & 11.86& \textbf{81.61} & 11.96 \\
    
        \cline{2-21} 
        
        & \multicolumn{2}{c}{Average} &72.92 &50.68 &70.37 &16.94 &70.44 &20.48 &70.78 &17.31 & 70.70 & 13.74&63.23 &19.28 &65.91 &14.71 &66.28 &10.57& \textbf{74.46} & \textbf{9.84} \\ 
        \bottomrule
    \end{tabular}
\label{tab:overall_performance_of_defenses}
\end{table*}

From the view of a more rigorous evaluation, we undertake a series of comprehensive experiments to assess \ours{} across four distinct dimensions: effectiveness, robustness, generalization, and the examination of core components and hyper-parameters through ablation studies. 
We complete the assessment by answering the following research questions (\textbf{RQs}):
\begin{itemize}[noitemsep]
    \item\textbf{RQ1. The effectiveness of \ours{} on mitigating backdoors for pre-trained encoders.}
    \begin{itemize}
        \item\textbf{RQ1.1.} How effective is \ours{} in removing backdoors in SSL?
    \end{itemize}
    \item\textbf{RQ2. Ablation studies on core components and hype-parameter.}
    \begin{itemize}
        \item\textbf{RQ2.1.} How does each core component (including \textit{clone loss}, \textit{attention loss}, and weight scheduler) affect \ours{}? 
        \item \textbf{RQ2.2.} How do the hype-parameters $\lambda_1$ and $\lambda_2$ affect \ours{}?
    \end{itemize}
    \item\textbf{RQ3. The robustness of \ours{}.}
    \begin{itemize}
        \item\textbf{RQ3.1.} How does the trigger size affect \ours{}?
        \item\textbf{RQ3.2.} How does the clean data ratio affect \ours{}?
        \item\textbf{RQ3.3.} How does the poison ratio affect \ours{}?
        \item\textbf{RQ3.4.} What is the performance of \ours{} against adaptive attack?
    \end{itemize}
    \item\textbf{RQ4. The generalization of \ours{}.}
    \begin{itemize}
        \item\textbf{RQ4.1.} Will \ours{} continue its performance when extending to supervised learning? 
        \item\textbf{RQ4.2.} How does \ours{} react to clean encoders?
    \end{itemize}
\end{itemize}

\subsection{Experimental Setup}
\label{subsec:experimental_setup}
\noindent\textbf{Datasets and Models.}
In our evaluation, we utilize ResNet18~\cite{ResNet} as our default model.
All experiments are conducted on four widely-used datasets, including CIFAR-10~\cite{2009-CIFAR10}, STL-10~\cite{2011-STL10}, GTSRB~\cite{2012-GTSRB}, and SVHN~\cite{2011-SVHN}. 
Below are the details for the four datasets.
\begin{itemize}[topsep=0pt, itemsep=0pt, leftmargin=15pt]
     \item \textbf{STL10~\cite{2011-STL10}.}
     There are 105,000 training images, 5,000 of which are labeled while others are not, and 8,000 test images in this dataset.
     Each image has a size of 96x96x3 and belongs to 10 classes.
    \item \textbf{CIFAR10~\cite{2009-CIFAR10}.}
    There are 50,000 training images and 10,000 test images in this dataset, each of which is 32x32x3.
    10 classes in total.
    \item \textbf{GTSRB~\cite{2012-GTSRB}.} The dataset encompasses more than 50,000 images distributed across 43 categories, with each image 32×32×3.
    \item \textbf{SVHN~\cite{2011-SVHN}.} There are more than 70,000 training images and 20,000 test images of Google Street View to represent house numbers.  
    Each image has a size of 32×32×3 and is associated with one of the 10 digits.
\end{itemize}

\smallskip
\noindent\textbf{Attacks and setting.} We consider a series of \revise{state-of-the-art} attack techniques, \revise{all of which are designed for SSL specifically:} a) BadEncoder~\cite{2022-BadEncoder}, b) BASSL~\cite{BASSL}.
To verify \ours{}'s effectiveness in defending these attacks, we allow attackers to maximize their knowledge to realize attack targets. 
For example, when conducting BASSL, we migrate more than half of the downstream target class to inject backdoors.
To ensure equitable comparison, the utilization of triggers entails the employment of uniform white squares with dimensions of $10 \times 10$.

\smallskip
\noindent\textbf{Defender knowledge and settings.}
In a typical defense scenario, we assume that we have obtained an untrustworthy encoder from a third party, such as an outsourced training service.
To defend against the encoder's potential backdoors, we have a small set of clean data, 4\% of the pre-training data, specifically.
The objective of \ours{} is to replicate a reliable encoder from the unverified encoder without any hidden malicious behavior, and ensure that it maintains high accuracy when handling uncontaminated data.

\smallskip
\noindent\textbf{Baselines.} 
We compare \ours{} with seven existing defense methods, including standard Fine-Tuning (FT), Fine Pruning(FP)~\cite{2018-FP},  Neural Attention Distillation (NAD)~\cite{2021-NAD}, Adversarial Neuron Pruning (ANP)~\cite{2021-ANP}, Model Orthogonalization(MOTH)~\cite{2022-MOTH}, DECREE~\cite{2023-DECREE} and MEDIC~\cite{2023-MEDIC}.
In order to conduct a fair test, all defense methods are assumed to have equal access to a set of 2,000 clean training images, which constitutes 4\% of the available data. Note that all the above defenses were initially created for use in supervised learning, and we are adapting and applying them to the domain of self-supervised learning.

\noindent\textbf{Evaluation Metrics}
We follow existing works~\cite{2017-BadNets, 2022-DBD, 2022-MOTH, 2022-BadEncoder, BASSL} and use attack success rate (ASR) and clean accuracy (ACC) on downstream classifiers as the metrics.
ASR measures the percentage of trigger-injected inputs predicted as the target label by the downstream classifier.
ACC measures the classification accuracy of the downstream classifier on a clean test set.
Note that the attack and defense processes take place during pre-training while the assessment occurs downstream.

\subsection{RQ1:Evaluation Results on Effectiveness}
\label{subsec:evaluation_results}
\subsubsection{RQ1.1: How effective is MIMIC in removing backdoors in SSL}
\label{subsubsec:rq1_effectiveness}
A well-designed backdoor mitigation technique should remove backdoors in pre-trained encoders while maintaining the performance on clean data.
Considering pre-trained encoders are applied to specific downstream tasks, we deploy backdoor mitigation techniques during pre-training but conduct evaluation in downstream.
Given the present absence of dedicated defense techniques tailored specifically for SSL, we adopt seven SOTA defense techniques proposed for supervised learning as baselines, details of which are illustrated below.
In particular, DECREE~\cite{2023-DECREE} is an approach aiming to distinguish backdoored pre-trained encoders from clean ones through trigger inversion. 
In the context of backdoor mitigation, we employ an inversed trigger to nullify the trigger information by reducing the similarity between pristine images and those doctored with the inversed trigger. 

Table~\ref{tab:overall_performance_of_defenses} presents the effectiveness of different defense techniques in mitigating backdoors in self-supervised learning.
We conduct experiments on two distinct types of attacks BadEncoder~\cite{2022-BadEncoder} and BASSL~\cite{BASSL}. 
The experimental results for the former are presented in the top half of the table, while those of the latter are shown in the bottom half.
The column labeled ``Undefended'' delineates the outcomes achieved by classifiers constructed upon the backdoored encoders. 
Subsequent columns present the performance of classifiers built on repaired encoders by seven baselines and \ours{}.
Observe that \ours{} demonstrates the most significant decrease in ASR and the least amount of ACC loss.
Respectively, \ours{} exhibits no ACC loss while achieving an ASR reduction exceeding 89.53\% on the BadEncoder-attack, a performance pattern that sets it apart from all baselines. 
Taking the GTSRB classifier built on BadEncoder-attacked CIFAR10 encoder (the first row) as an example, the undefended classifier has an ASR of 99.09\%.
\ours{} reduces it to 1.36\%, whereas existing techniques can only reduce the ASR to 5.77\% as best.  
As for the second attack, note that the ASRs of classifiers built on BASSL are much lower than those by BadEncoder. 
This is consistent with the observation by~\cite{li2022demystifying}.
The original BASSL paper~\cite{BASSL} uses the number of false positives (misclassified samples) instead of ASR as the metric. 
Nevertheless, \ours{} is still able to eliminate those backdoors with the largest ASR reduction, from 50.68\% to 9.84\% on average. 
In certain instances, we discern that \ours{} maintains ASRs within the range of approximately 10\%-20\%. 
This phenomenon arises due to the fact that presenting these trigger-laden images to an untainted classifier also yields a comparable percentage of predictions matching the target label. 
But overall, \ours{} outperforms the seven baselines in backdoor removal impressively.
\revise{
In certain scenarios, such as CIFAR10 + GTSRB for BASSL attacks, \ours{} exhibits slightly lower accuracy (ACC) compared to some baseline methods. This can be attributed to the trade-off between backdoor mitigation and feature preservation during the distillation process. In datasets with complex and diverse feature distributions, such as GTSRB, the process of suppressing backdoor behaviors may inadvertently affect the retention of fine-grained features critical for downstream tasks, leading to a slight drop in ACC. However, it is important to note that \ours{} consistently achieves the lowest attack success rate (ASR) across all scenarios, indicating its robustness in effectively mitigating backdoors.
}

\subsection{RQ2. Evaluation Results on Ablation Studies.}
\label{rq2:ablation_studies}
\subsubsection{RQ2.1. How does each core component (including clone loss, attention loss, and weight scheduler) affect \ours{}}
\ours{} contains four core components \textit{clone loss} (denoted $L_0$ and $L_1$), \textit{attention loss} (denoted $L_2$), and weight scheduler guided by mutual information.
As declared in Section~\ref{sec:design}, \ours{} first creates an empty encoder as the student net to learn benign knowledge from a given poisoned pre-trained encoder, which is treated as the teacher net.
Then, \ours{} utilizes the weight scheduler to locate benign knowledge in teacher net and \textit{clone loss} to transfer it to student net.
In this process, \textit{attention loss} is deployed to ensure both teacher and student models pay attention in a similar way to clean data.
Experiments are conducted on CIFAR10 (the pre-training dataset) and GTSRB (the downstream task).
We explore the influence of each component on \ours{} by controlling variables, observing whether they function as intended.

Figure~\ref{fig:ablation_core_components} reports the experimental results, where lines denote the ACC and bars the ASR.
Observe that only \textit{clone loss} without \textit{attention loss} and MI-guided benign knowledge localization makes \ours{} inherit backdoor behaviors inevitably, as shown in the last two columns.
The observation is in line with our expectations.
The key to mitigate backdoors relies on BKL and \textit{attention loss}($L_2$).
Component BKL uses mutual information to locate benign knowledge, without which malicious knowledge is also transferred, leading to an uptrend in ASR.
Component $L_2$, \textit{attention loss} aligns the neurons on benign features instead of poisoned ones. This alignment strategy is instrumental in diminishing the impact of poisoned inputs.
\textit{Clone loss}($L_0$ and $L_1$) makes contributions to ACC from the second and third columns, without which the blue line representing the accuracy trend shows a decline
Respectively, $L_0$ minimizes the cosine similarity between the features extracted by teacher and student nets, through which the behavior of the student network is closer to the teacher network.
$L_1$ is achieved by contrastive loss as declared in Eq~\ref{eq:contrastive_loss}, following previous works~\cite{simclr}.
\ours{} uses data argumentation to construct positive pairs and different images as negative samples.
$L_1$ helps $L_0$ to focus on the image semantics to improve encoders' performance.
In summary, these components interact to ensure \ours{}'s effectiveness. 
\begin{figure}[tbp]
    \centering 
    \includegraphics[width=0.8\columnwidth]{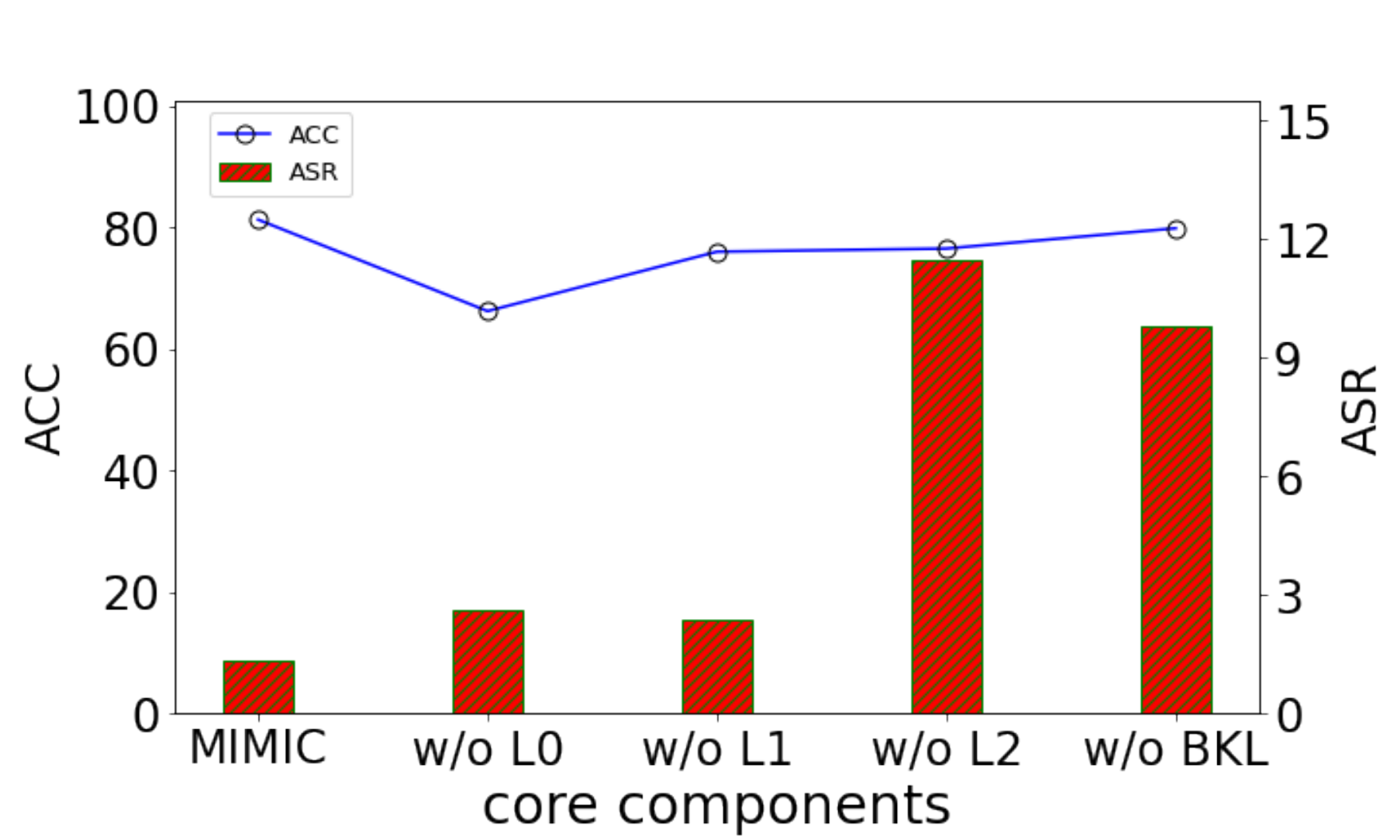}
    \caption{
    \revise{Influence of core components. The x-axis represents different configurations of the framework, including \ours{} and versions without specific components: $L_0$, $L_1$, $L_2$, and BKL (Benign Knowledge Localization module guided by MI). The left y-axis (blue line) shows the classification accuracy (ACC), while the right y-axis (red bar) shows the attack success rate (ASR). This comparison highlights the impact of removing each component on the framework's performance.}
    }
    \vspace{-12pt}
    \label{fig:ablation_core_components}
\end{figure}

\subsubsection{RQ2.2. How do the hype-parameters $\lambda_1$ and $\lambda_2$ affect \ours{}}
\begin{figure*}[htbp]
    \centering
    \subfloat[Hype-parameter: $\lambda_1$]{%
        \includegraphics[width=0.45\linewidth]{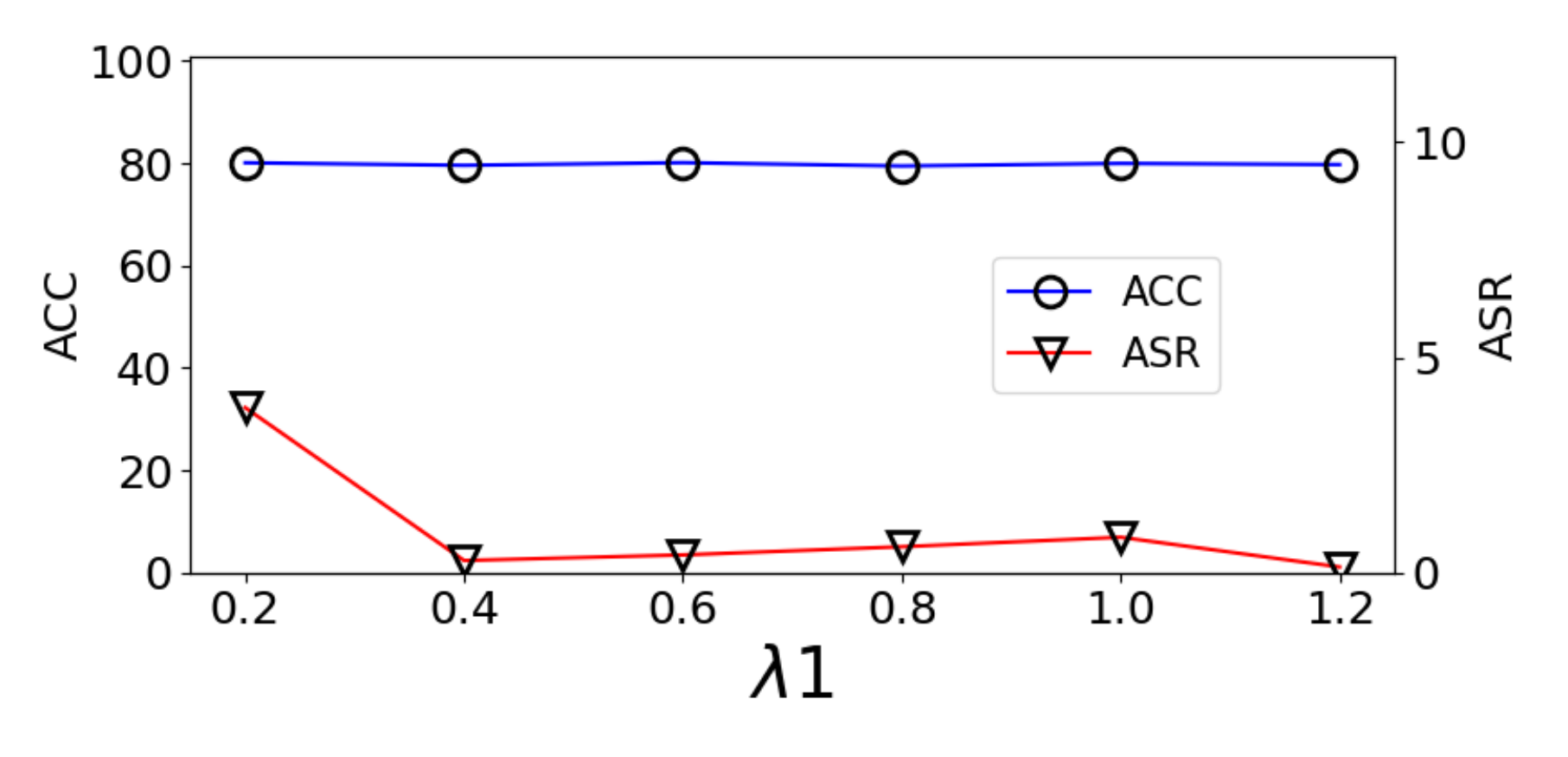}
        \label{fig:lambda1}
    }\hfill
    \subfloat[Hype-parameter: $\lambda_2$]{%
        \includegraphics[width=0.45\linewidth]{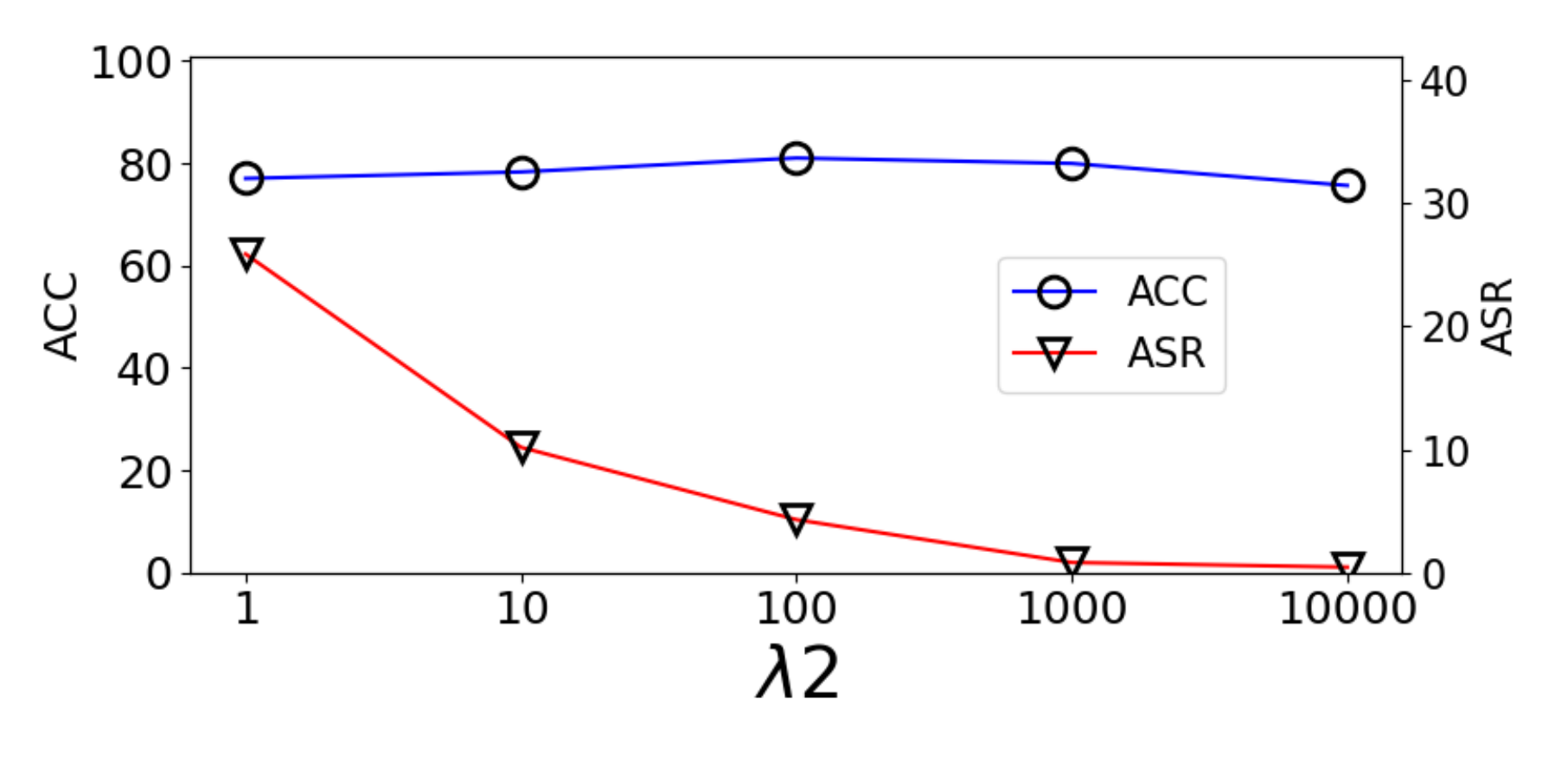}
        \label{fig:lambda2}
    }
     \caption{The influence of hype-parameters $\lambda_1$ and $\lambda_2$ on \ours{}. \revise{The x-axis represents specific hyper-parameter values (\(\lambda_1\) and \(\lambda_2\)). The left y-axis corresponds to accuracy (ACC), while the right y-axis represents the attack success rate (ASR).}} 
    \label{fig:lambda1_lambda2}
    \vspace{-12pt}
\end{figure*}

Figure~\ref{fig:lambda1_lambda2} presents the impact of $\lambda_1$ and $\lambda_2$ on MIMIC's performance.
The value ranges of $\lambda_1$ and $\lambda_2$ are determined for balancing the magnitude of different loss terms.
Observe that \ours{}'s ACC is stable.
The reason behind this is the existence of $L_0$ in Eq~\ref{eq:opt_loss}, detailed in Eq~\ref{eq:L_0}, aligning the benign features between teacher nets and student nets.
Moreover, the larger $\lambda_1$ and $\lambda_2$ are, the less ASR MIMIC achieves.
Particularly, $\lambda_1$ and $\lambda_2$ help MIMIC to focus more on benign knowledge of teacher nets.
When $\lambda_1>$0.4 and $\lambda_2>$1000, the ASR can be effectively reduced by MIMIC.

\subsection{RQ3. Evaluation Results on Robustness.}
\label{rq3:robustness}
\subsubsection{RQ3.1. How does the trigger size affect \ours{}}
In this section, we study how the size of the injected trigger impacts the performance of \ours{}.
We conduct experiments on three different sizes, including $3 \times 3$, $5 \times 5$, and $10 \times 10$. The experimental results are shown in Figure~\ref{fig:trigger_size}, where the red bar indicates the extent of ASR reduction after \ours{}, whereas the blue bar represents the corresponding ACC loss. 
Observed that \ours{} effectively removes backdoors of different sizes with acceptable degradation.

\subsubsection{RQ3.2. How does the clean data ratio affect \ours{}}
\revise{Mutual Information (MI) is a fundamental metric for quantifying the dependency between variables, encompassing both linear and nonlinear correlations. While it is well-suited for tasks requiring robust measures of relevance, MI is known to be a biased estimator, particularly when sample sizes are limited. This bias, however, decreases as the sample size increases, making it more reliable with larger datasets.
In our study, we explore the impact of this bias by varying the sample sizes of the clean data used for \ours{}.
The specific hyperparameter is clean data ratio ($\gamma$).
It represents the amount of clean data accessible to the defender, expressed as a ratio of the pre-training data.
}
The clean data ratio ($\gamma$), as a critical hyperparameter, consistently influences the performance of all defense methods. 
It represents the amount of clean data accessible to the defender, expressed as a ratio of the pre-training data.
We explore the influence of $\gamma$ by setting different values, taking five values ranging from 0.01 to 0.05 in steps of 0.01.
\revise{
Requiring $\leq 5$\% clean data in the assumed defense scenario is a realistic and commonly adopted setting in current backdoor mitigation techniques~\cite{2021-NAD,2021-ANP,2022-MOTH}. This small ratio aligns with real-world practices where organizations using pre-trained encoders from third-party sources often maintain access to trusted datasets representative of their operational domain. For instance, in critical sectors like healthcare, finance, and autonomous systems, organizations typically retain a subset of verified clean data for tasks such as validation or fine-tuning. This practice ensures that models are reliable and perform well within specific application contexts.
}
We use CIFAR10 as the pre-training dataset and STL10 as the downstream task dataset for experiments. 
The experimental results are presented in Figure~\ref{fig:data_ratio}, where the x-axis represents the clean data ratio, the left y-axis represents ACC, and the right-axis represents ASR.
It is observed that as the ratio of clean data gradually increases, the ACC of \ours{} increases gradually, while the ASR decreases consistently and reaches the best performance at 0.05.
This observation is intuitive as a larger $\gamma$ corresponds to defenders having access to a greater amount of extra knowledge, thereby enabling defensive approaches to achieve more remarkable performance outcomes.
\revise{Empirical results demonstrate that, even with no more than 5\% clean data, MI remains sufficiently effective in guiding the identification of benign knowledge within backdoored encoders. As the sample size increases, the consistency and precision of the MI estimates improve, further enhancing the robustness of our method.}

\subsubsection{RQ3.3. How does the poison ratio affect \ours{}}
In backdoor removal, it is a common threat model assuming a confident small set of clean training data,
which aligns with the literature~\cite{2022-MOTH, 2021-NAD, 2021-ANP}.
We evaluate the performance of MIMIC when the given clean set contains poisoned data.
As shown in the upper right figure, the x-axis denotes different poison ratios on the defender-retained data.
MIMIC can consistently reduce ASR, even when 25\% of the set is poisoned, delineating the robustness of MIMIC.
The results indicate that a small amount of malicious data cannot break MIMIC's effective mechanism.

\subsubsection{RQ3.4. What is the performance of \ours{} against adaptive attack}
\ours{} removes backdoors by cloning benign knowledge from backdoored encoders to empty ones with the guidance of mutual information.
During evaluation, we conduct an adaptive attack that aims to evade our defense.
The attack adds a regularization loss to minimize the distance between benign and poisoned knowledge measured by mutual information.
Figure~\ref{fig:adaptive_attack} reports the results.
Observe that the ASR of adaptive attack is only 33.29\%. This is because the regularization loss forces the benign and poisoned features to be the same, which contradicts the main attack objective.
For this adaptively attacked encoder, MIMIC can still reduce the ASR to 11.88\%.

\begin{figure}[t] 
    \centering
    \begin{minipage}[t]{0.49\columnwidth}
        \centering
        \includegraphics[width=\columnwidth]{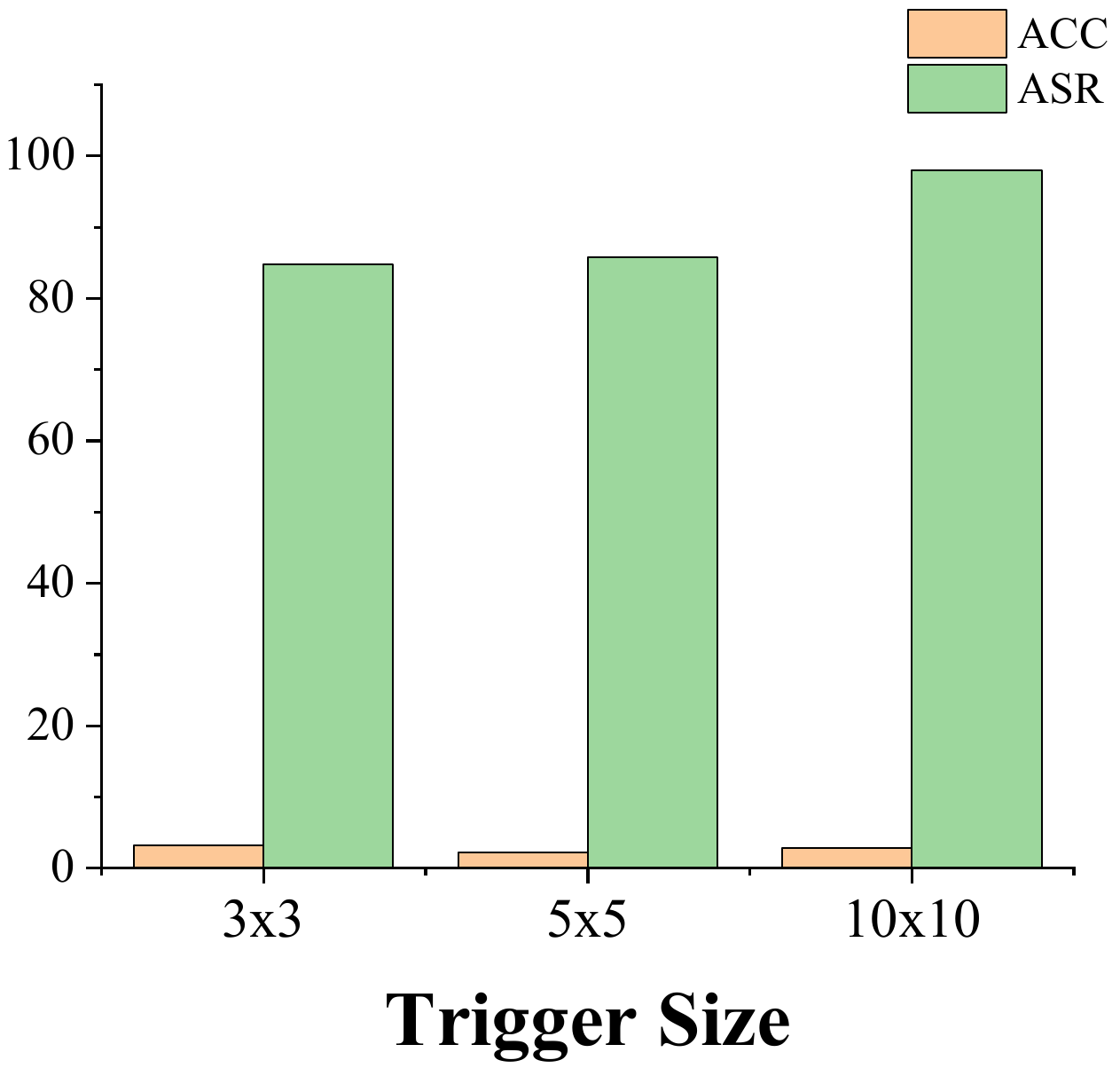}
        \caption{Effect of trigger size}
        \label{fig:trigger_size}
    \end{minipage}
    \hfill 
    \begin{minipage}[t]{0.49\columnwidth} 
        \centering
        \includegraphics[width=\columnwidth
        ]{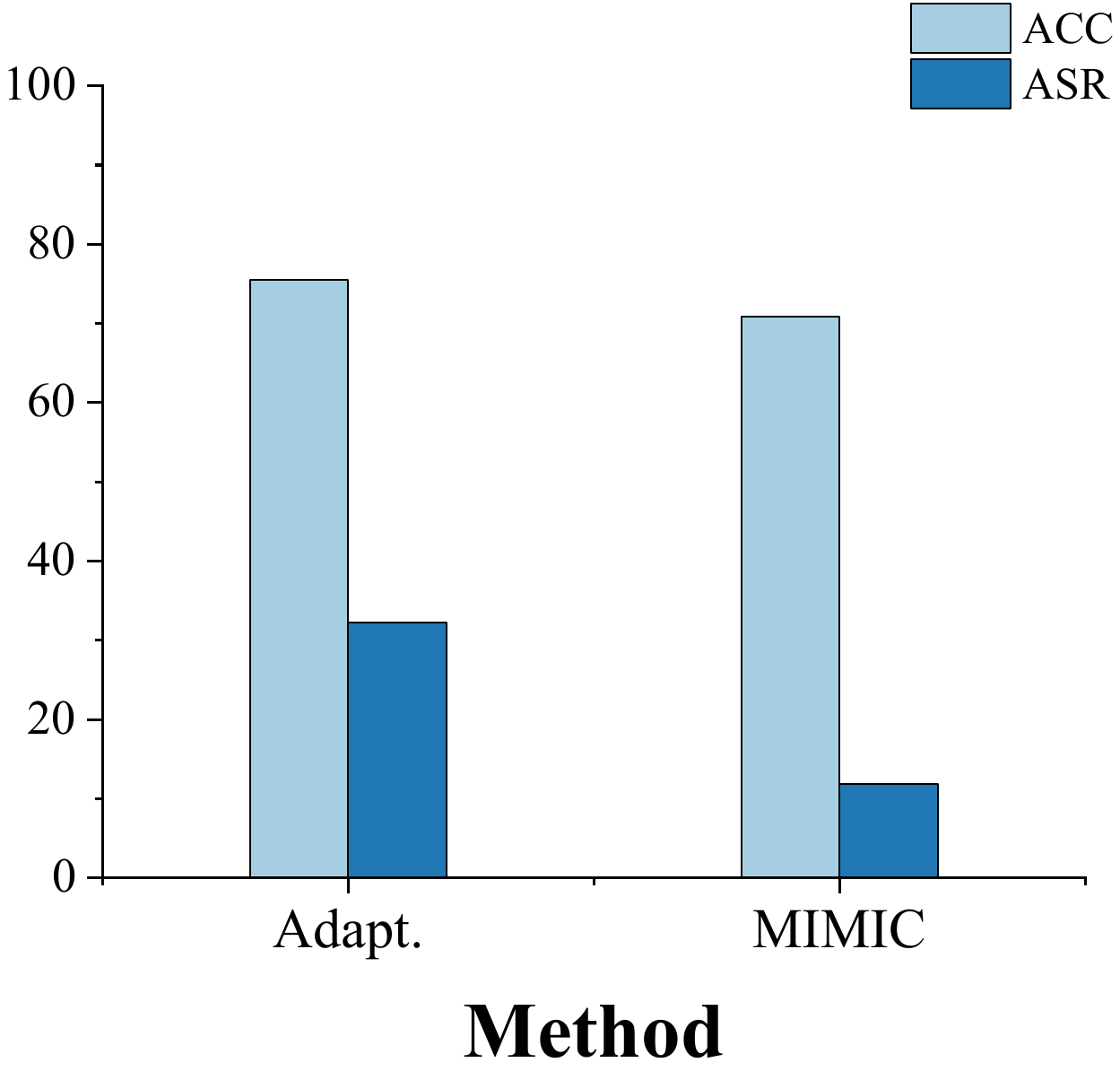}
        \caption{Adaptive attack}
        \label{fig:adaptive_attack}
    \end{minipage}
\end{figure}
\vspace{-12pt}

\begin{figure}[t] 
    \centering
    \begin{minipage}[t]{0.49\columnwidth}
        \centering
        \includegraphics[width=\columnwidth]{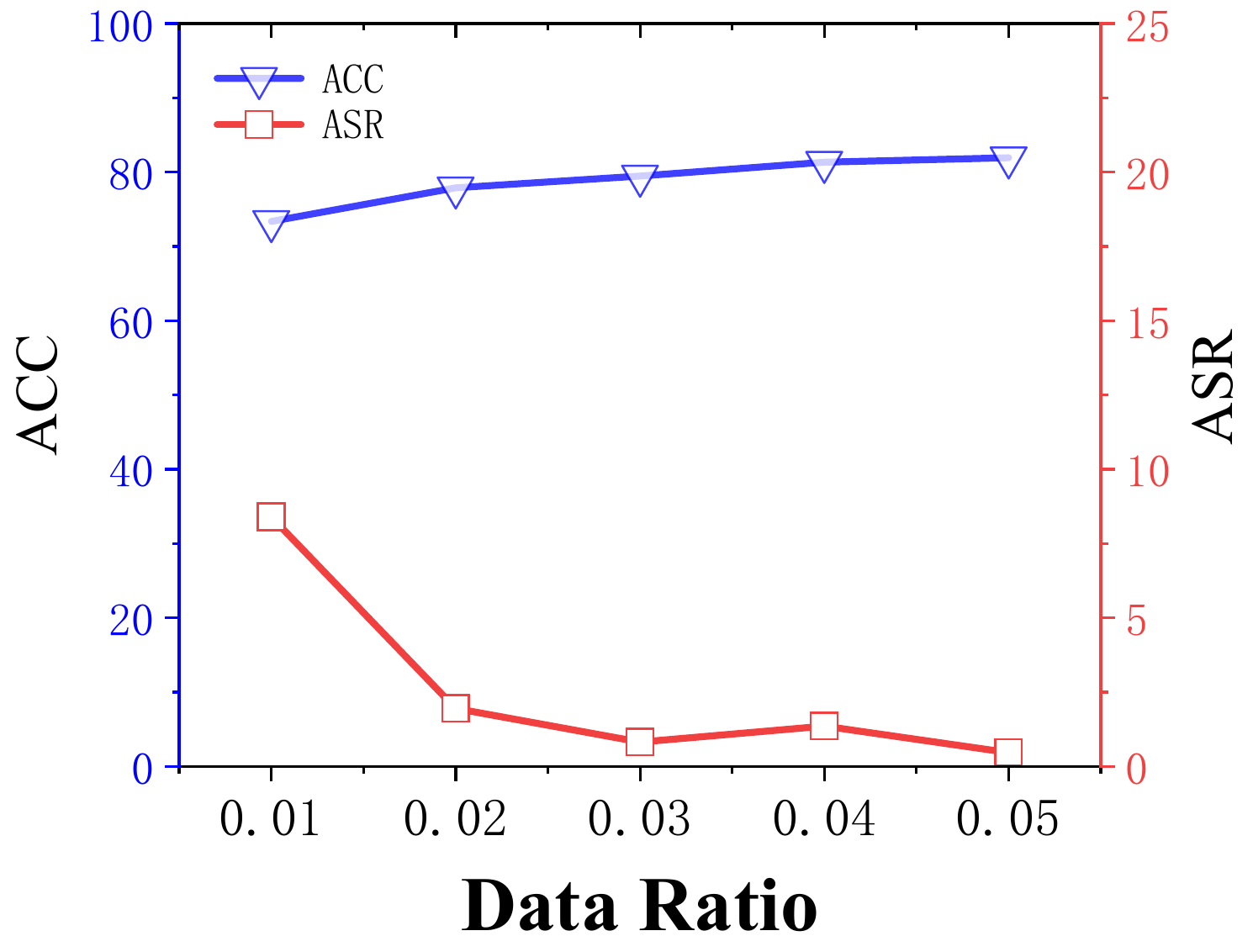}
    \caption{Effect of clean data}
    \label{fig:data_ratio}
    \end{minipage}
    \hfill 
    \begin{minipage}[t]{0.49\columnwidth} 
        \centering
        \includegraphics[width=\columnwidth
        ]{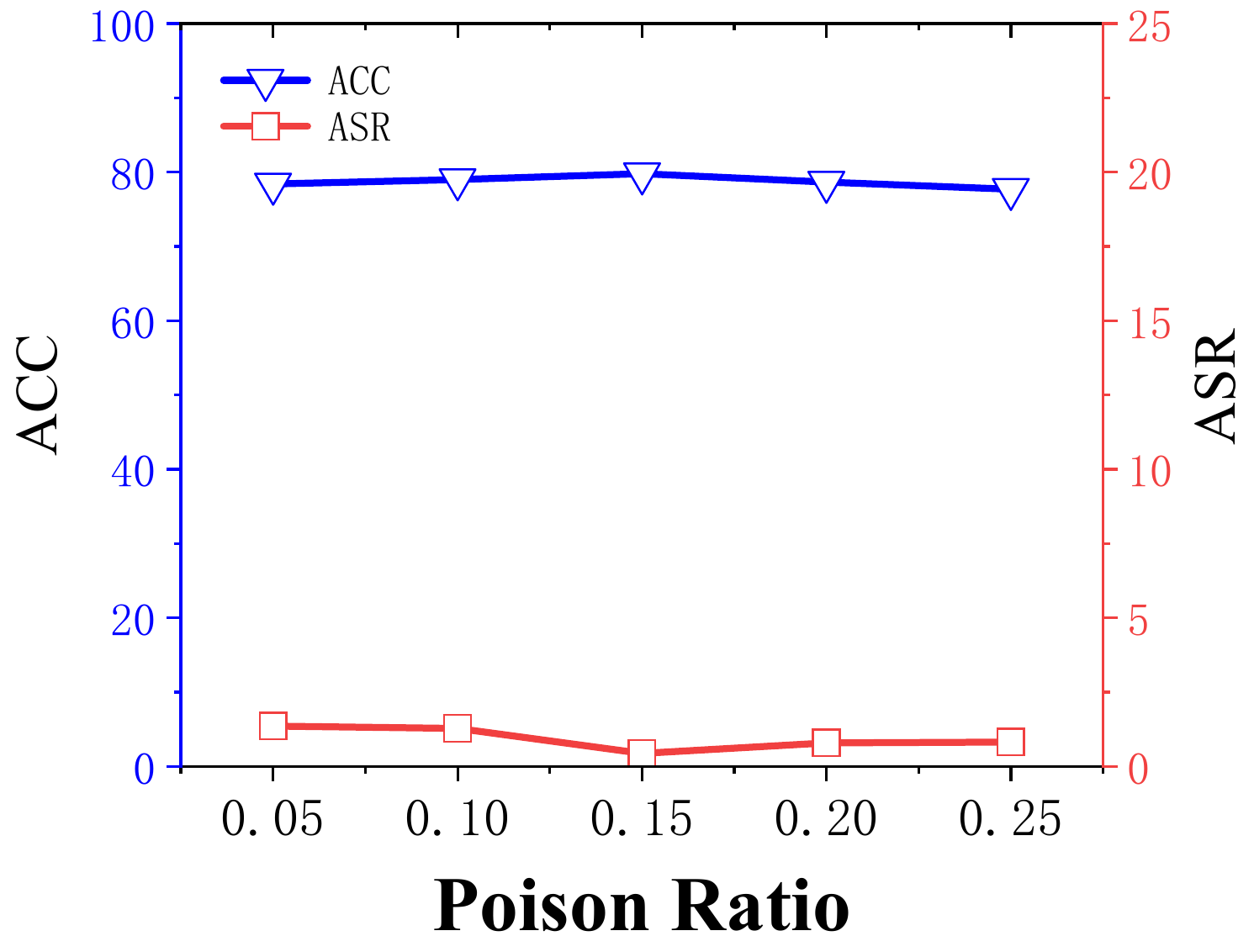}
    \caption{Effect of poison ratio}
    \label{fig:poison_ratio}
    \end{minipage}
\end{figure}

\subsection{RQ4. Evaluation Results on Generalization.}
\label{rq2:generalization}
\subsubsection{RQ4.1. Will MIMIC continue its performance when
extending to supervised learning}
\begin{table}[ht]
    \centering
    \footnotesize
    \caption{Extensive study on SL}
    \begin{tabular}{ccccccc}
        \toprule
        \multirow{2}{*}{Method} & \multicolumn{2}{c}{Four Corner} & \multicolumn{2}{c}{Grid Trigger} & \multicolumn{2}{c}{Random Pixel} \\ 
        \cmidrule(lr){2-3} \cmidrule(lr){4-5} \cmidrule(lr){6-7}  
         & ACC & ASR & ACC & ASR & ACC & ASR \\
         \cmidrule(lr){1-7}
        Undefended & 83.89 & 100 & 84.37 & 100 & 84.01 & 100 \\
        ~\ours{} & 78.13 & 5.77 & 78.01 & 2.50 & 77.67 & 0.01 \\
        \bottomrule
    \end{tabular}
    \label{tab:extensive_sl}
\end{table}

Given that classifiers also exhibit a linear structure suiting our key intuition, we expand \ours{} to encompass the realm of supervised learning. 
The defenders are available to 0.05\% of the training data and have no other knowledge of attack or model training schedule.
We conduct \ours{} on BadNets, a typical attack on supervised learning, with three different triggers~\cite{2017-BadNets}.
Table~\ref{tab:extensive_sl} exhibits the experimental results, where CIFAR10 and resnet18 are used as dataset and model architecture, respectively.
Note that \ours{} have the competence in reducing ASR to 5\%, albeit at the expense of ACC loss of approximately 7\%.
This phenomenon can be attributed to \ours{} employing an empty model as the student net, which impacts classifier performance.
The prospect of such extensions remains a subject of future investigation.
\subsubsection{RQ4.2. How does MIMIC react to clean encoders}
\begin{table}[htb]
    \centering
    \tabcolsep=13.5pt
    \caption{Impact on clean encoders}
    \begin{tabular}{cccc}
        \toprule
         \multirow{2}{*}{Pre-train} & \multirow{2}{*}{Downstream} & \multicolumn{2}{c}{Accuracy} \\ 
        
        \cmidrule{3-4} 
        & & Raw & \ours{} \\
         
        \cmidrule{1-4}
        \multirow{3}{*}{CIFAR10} & STL10 & 75.31 & 72.22 \\
        
        & GTSRB & 82.42 & 81.33 \\
        
        & SVHN & 57.01 & 75.32 \\
        \bottomrule
    \end{tabular}
    \label{tab:impact_clean_encoder}
\end{table}

Considering the fact that defenders lack knowledge where a given encoder is backdoored, we investigate the ramifications of \ours{} in scenarios where clean encoders are provided.
Table~\ref{tab:impact_clean_encoder} showcases the results of our experimental investigations involving the application of \ours{} on clean encoders.
\revise{Column ``raw'' denotes the performance metrics of downstream models when utilizing the original, unprocessed encoder directly, without any backdoor mitigation or purification methods applied.}   
It is noteworthy that \ours{} effectively manages to control the reduction in accuracy to a level that is considered entirely acceptable.
\revise{
Observe that \ours{} improves the accuracy for SVHN (75.32 vs. 57.01).  It reflects the compatibility of \ours{}'s distilled encoder with the characteristics of the SVHN dataset.  The digit classification task in SVHN benefits from the focused feature extraction provided by \ours{}, as it emphasizes core features that align well with the dataset’s relatively constrained feature space.  This differs from other downstream tasks, where the complexity of the features and the trade-off between backdoor mitigation and accuracy retention may result in slight performance reductions.
This observation is also consistent with previous works~\cite{2022-BadEncoder}.
}

\subsubsection{RQ4.2. How much time does \ours{} cost}
\revise{
The time efficiency of \ours{} is one of its strengths, particularly when compared to the second-best baseline NAD~\cite{2021-NAD}.
\ours{} first identifies the benign knowledge by training the mutual information estimator. 
This process takes approximately 680 seconds, which accounts for the estimation across layers with a manageable clean dataset.
With the guidance of mutual information weights, the student encoder is trained to distill benign knowledge from the poisoned teacher encoder. This process requires an additional 3,199 seconds.
\ours{} takes 3,879 seconds in total.
In contrast, NAD, a state-of-the-art baseline, requires significantly more computation, needing 6,248 seconds due to its reliance on repeated fine-tuning to address backdoor behavior inherited during distillation.
\ours{}, with its lightweight mutual information-guided approach, effectively reduces attack success rates while being far more time-efficient, making it a practical and scalable solution for backdoor mitigation.
}

\section{Conclusion}
\label{sec:conclusion}
In this work, we propose \ours{}, a novel framework leveraging mutual information to mitigate backdoor attacks in pre-trained encoders within self-supervised learning (SSL). Beyond achieving significant reductions in attack success rates and maintaining competitive accuracy, our work offers key insights into the unique challenges of backdoor mitigation in SSL, such as the absence of labeled data and task-specific context.
Our findings underscore the critical role of mutual information in identifying and preserving benign knowledge while suppressing malicious behaviors, paving the way for more effective and adaptable defense mechanisms. While \ours{} demonstrates robustness across diverse datasets and attack scenarios, we acknowledge certain trade-offs, such as slight accuracy reductions for complex downstream tasks, and identify these as areas for future optimization.
Our study highlights the importance of further exploring data-efficient defense strategies and extending \ours{} to address adaptive attacks and more sophisticated threat models. By bridging the gap between security and performance in SSL, we aim to inspire broader research into secure and trustworthy AI.

\appendix
\section{Appendix}
\label{sec:appendix}

\noindent\textit{Proof sketch of Theorem~\ref{theo:markov_benign_features}.}
We first prove that $Z\rightarrow \mathcal{F}_\theta^{n-1}\rightarrow \mathcal{F}_\theta^{n-2}\rightarrow\cdots\mathcal{F}_\theta^{0}\rightarrow X$ is a Markov chain. 
Then we deploy data processing inequality to achieve the final conclusion.
\begin{assumption}[Markov chain]
    Considering a neural net comprising of N layers, $X \rightarrow \mathcal{F}_\theta^0 \rightarrow \cdots   \rightarrow \mathcal{F}_\theta^{N-1} \rightarrow Z$ is a Markov chain.
    \label{assump:markov}
\end{assumption}
\begin{proposition}[Reversed Markov chain]
    If $X \rightarrow \mathcal{F}_\theta^0 \rightarrow \cdots \rightarrow \mathcal{F}_\theta^{N-1} \rightarrow Z$ is a Markov chain, then $Z \rightarrow \cdots \rightarrow \mathcal{F}_\theta^{N-1} \rightarrow X$ is also a Markov chain.
    \label{prop:re_markov}
\end{proposition}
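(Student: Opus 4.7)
The plan is to derive the reversed Markov property from the standard conditional-independence characterization of a Markov chain, exploiting the symmetry of conditional independence in its first two arguments. Beyond what is already implicit in Assumption~\ref{assump:markov}, no new probabilistic machinery is required; the statement is essentially a bookkeeping consequence of the definitions.

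First I would invoke the equivalent definition: the sequence $X \to \mathcal{F}_\theta^0 \to \cdots \to \mathcal{F}_\theta^{N-1} \to Z$ is a Markov chain iff, at every node of the chain, the collection of variables preceding that node is conditionally independent of the collection of variables following it, given the node itself. For each internal position $k$, this reads
\[
(X, \mathcal{F}_\theta^0, \ldots, \mathcal{F}_\theta^{k-1}) \perp (\mathcal{F}_\theta^{k+1}, \ldots, \mathcal{F}_\theta^{N-1}, Z) \mid \mathcal{F}_\theta^k,
\]
with analogous statements at the two endpoints. I would then apply the elementary symmetry $A \perp B \mid C \iff B \perp A \mid C$ at each node. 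This swap interchanges the ``past'' and ``future'' of each node while keeping the conditioning variable fixed, and the resulting family of conditional independences is exactly the Markov-chain characterization of the reversed sequence $Z \to \mathcal{F}_\theta^{N-1} \to \cdots \to \mathcal{F}_\theta^0 \to X$.

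As a more constructive alternative, I would refactor the joint density. Starting from the forward factorization
\[
p(x, f^0, \ldots, f^{N-1}, z) = p(x)\, p(f^0 \mid x) \cdots p(z \mid f^{N-1}),
\]
Bayes' rule applied term by term (from right to left) rewrites the same joint as
\[
p(x, f^0, \ldots, f^{N-1}, z) = p(z)\, p(f^{N-1} \mid z) \cdots p(x \mid f^0),
\]
which is itself a forward factorization for the reversed chain, from which the Markov property in the reversed direction can be read off directly. The main obstacle is not mathematical depth but careful bookkeeping: the symmetry of conditional independence is a one-line fact, yet one must verify that the reversed property holds at every node simultaneously rather than at a single position, and that the endpoint cases (where one side of the split is empty) are handled without ambiguity. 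A minor subtlety concerns conditioning on zero-probability events, which is resolved in the standard way by passing to regular conditional distributions; in the neural-network setting of interest this detail can safely be suppressed.
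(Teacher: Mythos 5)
Your proposal is correct, but it follows a different route from the paper. The paper proves Proposition~\ref{prop:re_markov} by explicit induction on the number of layers, working directly with conditional densities: the base case verifies $p(x\mid h_0,z)=p(x\mid h_0)$, and the inductive step uses the forward property $p(z\mid x,h_k,\dots,h_0)=p(z\mid h_k)$ to cancel the $z$-factor and then peels off $h_k,h_{k-1},\dots$ one at a time until only $p(x\mid h_0)$ remains. Your first argument instead passes to the equivalent global characterization of a Markov chain (past independent of future given the present, at every node) and applies the symmetry of conditional independence, $A\perp B\mid C \iff B\perp A\mid C$; your second argument rewrites the forward factorization $p(x)\,p(f^0\mid x)\cdots p(z\mid f^{N-1})$ term by term via Bayes' rule into the reversed factorization, from which the reversed chain property is read off. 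Both routes are sound. The symmetry argument is the most conceptual and, as you note, establishes the reversed property at all nodes simultaneously (the paper's induction targets the single identity $p(x\mid z,h_k,\dots,h_0)=p(x\mid h_0)$, which is the piece it actually uses downstream); its cost is that the equivalence between the one-step and global Markov properties is itself a small inductive lemma, so the induction has not disappeared, only moved. Your factorization alternative is essentially the paper's induction repackaged as a telescoping identity on the joint density, and is arguably the cleanest self-contained write-up. Your remark about regular conditional distributions for zero-probability conditioning events is a fair caveat that the paper also suppresses.
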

\begin{proof} 
We utilize mathematical induction to prove it.

\textbf{Base case.} When $N=1$, we need to prove $Z\rightarrow\mathcal{F}_\theta^0\rightarrow X$ is a Markov chain.
Given that $X\rightarrow\mathcal{F}_\theta^{n-1}\rightarrow Z$ is a Markov chain, then $p(z|x,h_{0})=p(z|h_{0})$, where $z \in Z$, $p(\cdot)$ the probability distribution and $h_l$ the output of layer $\mathcal{F}_\theta^l$.
\begin{align*}
p(x|h_0, z)&=\frac{p(x,h_0,z)}{p(h_0,z)}\\
&=\frac{p(z|x,h_0)\cdot p(x,h_0)}{p(h_0,z)}\\
&=\frac{p(z|x,h_0) \cdot p(x|h_0)\cdot p(h_0)}{p(z|h_0)\cdot p(h_0)}\\
&=p(x|h_0)
\end{align*}
Hence, the base case holds.

\textbf{Inductive Hypothesis.}
Assume that for some positive integer k, the equation holds:
$Z\rightarrow \mathcal{F}_\theta^{k-1}\rightarrow \mathcal{F}_\theta^{k-2}\rightarrow\cdots\mathcal{F}_\theta^{0}\rightarrow X$ is a Markov chain.

\textbf{Inductive Step.}
We need to prove that it holds for $k+1$, $Z\rightarrow \mathcal{F}_\theta^{k}\rightarrow \cdots \mathcal{F}_\theta^{1}\rightarrow \mathcal{F}_\theta^{0}\rightarrow X$ is a Markov chain, i.e., to prove:
$p(x|z,h_k,\cdots,h_0)=p(x|h_0)$.
 
Based on Assumption~\ref{assump:markov}, we have $X\rightarrow\mathcal{F}_\theta^{0}\rightarrow\cdots\mathcal{F}_\theta^{k-1}\rightarrow\mathcal{F}_\theta^{k}\rightarrow Z$ is a Markov chain, then  $p(z|x, h_{k},h_{k-1},\cdots,h_0)=p(z|h_k)$.
Hence,
\begin{align*}
    &p(x|z,h_k,h_{k-1},\cdots,h_0)\\
    =&\frac{p(z,x,h_k,h_{k-1},\cdots,h_0)}{p(z,h_{k},h_{k-1},\cdots,h_0)}\\
    =&\frac{p(z|x,h_k,h_{k-1},\cdots,h_0)\cdot p(x,h_k,h_{k-1},\cdots,h_0)}{p(z|h_k,h_{k-1},\cdots,h_0) \cdot p(h_k,h_{k-1},\cdots,h_0)}\\
    =&\frac{p(z|h_k)}{p(z|h_k)}\cdot\frac{p(x,h_k,h_{k-1},\cdots,h_0)} {p(h_k,h_{k-1},\cdots,h_0)}\\
    =&\frac{p(h_k|x,h_{k-1},\cdots,h_0)\cdot p(x,h_{k-1},\cdots,h_0)}{p(h_k|h_{k-1},\cdots,h_0)\cdot p(h_{k-1},\cdots,h_0)}\\
    =&\frac{p(x,h_{k-1},\cdots,h_0)}{p(h_{k-1},\cdots,h_0)}\\
    =&\cdots \\
    =&\frac{p(x,h_0)}{p(h_0)}\\
    =&p(x|h_0)
\end{align*}
\end{proof}

\begin{theorem}[Data Processing Inequality]
     If $X\rightarrow\mathcal{F}_\theta^{0}\rightarrow\mathcal{F}_\theta^{1}\rightarrow\cdots\mathcal{F}_\theta^{n-1}\rightarrow Z$ is a Markov chain, then $I(\mathcal{F}_\theta^{n-1},X)\leq I(\mathcal{F}_\theta^{n-2},X)\leq\cdots\leq I(\mathcal{F}_\theta^{1},X)\leq I(\mathcal{F}_\theta^{0},X)$.
     \label{theo:data_pro_ineq}
\end{theorem}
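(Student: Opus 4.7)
The plan is to reduce the stated chain of inequalities to a single telescoping application of a two-step data processing inequality, and then verify that each local Markov sub-chain inherits the required property from the global chain. It suffices to prove, for every $l \in \{1, \ldots, n-1\}$, the pairwise bound $I(\mathcal{F}_\theta^{l}; X) \leq I(\mathcal{F}_\theta^{l-1}; X)$; iterating this bound along $l = 1, 2, \ldots, n-1$ then produces the full descending chain stated in the theorem.

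For a fixed $l$, I first extract the sub-Markov-chain $X \to \mathcal{F}_\theta^{l-1} \to \mathcal{F}_\theta^{l}$ from the global assumption, marginalizing out the intermediate layers $\mathcal{F}_\theta^{0}, \ldots, \mathcal{F}_\theta^{l-2}$ and the downstream variables $\mathcal{F}_\theta^{l+1}, \ldots, \mathcal{F}_\theta^{n-1}, Z$. The resulting conditional independence can be written as $p(h_l \mid x, h_{l-1}) = p(h_l \mid h_{l-1})$, which is precisely the hypothesis needed to annihilate one of the two conditional mutual information terms in the expansion below.

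Next, I expand the joint mutual information $I(X; \mathcal{F}_\theta^{l-1}, \mathcal{F}_\theta^{l})$ in the two canonical ways using the chain rule of mutual information:
\begin{align*}
I(X; \mathcal{F}_\theta^{l-1}, \mathcal{F}_\theta^{l}) &= I(X; \mathcal{F}_\theta^{l-1}) + I(X; \mathcal{F}_\theta^{l} \mid \mathcal{F}_\theta^{l-1}) \\
&= I(X; \mathcal{F}_\theta^{l}) + I(X; \mathcal{F}_\theta^{l-1} \mid \mathcal{F}_\theta^{l}).
\end{align*}
The Markov property extracted in the previous step forces $I(X; \mathcal{F}_\theta^{l} \mid \mathcal{F}_\theta^{l-1}) = 0$, while the non-negativity of conditional mutual information gives $I(X; \mathcal{F}_\theta^{l-1} \mid \mathcal{F}_\theta^{l}) \geq 0$. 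Equating the two expansions yields $I(X; \mathcal{F}_\theta^{l-1}) \geq I(X; \mathcal{F}_\theta^{l})$, and telescoping across $l$ delivers the theorem.

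The main technical concern is not the core manipulation, which is classical, but rather justifying cleanly that the local three-variable Markov property $X \to \mathcal{F}_\theta^{l-1} \to \mathcal{F}_\theta^{l}$ genuinely follows from the global chain hypothesis. This amounts to a factorization-based argument confirming that conditional independence is preserved under marginalization of the omitted intermediate and tail variables; once that is made explicit, the remaining steps are routine chain-rule accounting.
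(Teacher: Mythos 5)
Your proposal is correct, but it takes a different route from the paper only in the sense that the paper does not prove this statement at all: Theorem~\ref{theo:data_pro_ineq} is invoked as a known result, with a citation to Beaudry and Renner's ``intuitive proof of the data processing inequality,'' and the appendix's actual proof effort is spent on Proposition~\ref{prop:re_markov} (the reversed Markov chain), after which the cited DPI is applied to the reversed chain. Your argument is the classical self-contained proof: extract the three-variable sub-chain $X \to \mathcal{F}_\theta^{l-1} \to \mathcal{F}_\theta^{l}$, expand $I(X;\mathcal{F}_\theta^{l-1},\mathcal{F}_\theta^{l})$ by the chain rule in both orders, use the Markov property to set $I(X;\mathcal{F}_\theta^{l}\mid\mathcal{F}_\theta^{l-1})=0$ and non-negativity of $I(X;\mathcal{F}_\theta^{l-1}\mid\mathcal{F}_\theta^{l})$ to conclude $I(X;\mathcal{F}_\theta^{l})\leq I(X;\mathcal{F}_\theta^{l-1})$, then telescope. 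This is exactly the textbook (Cover--Thomas) argument and is sound; the one step you flag as a concern --- that the local conditional independence survives marginalization of the skipped and downstream variables --- does follow directly from the factorization $p(x)\,p(h_0\mid x)\,p(h_1\mid h_0)\cdots p(z\mid h_{n-1})$, so making it explicit is routine. What the paper's citation buys is brevity; what your proof buys is a fully self-contained appendix in which both ingredients of Theorem~\ref{theo:markov_benign_features} (the reversal proposition and the DPI) are established from first principles with the same elementary probability manipulations.
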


Combined with Proposition~\ref{prop:re_markov} and Theorem~\ref{theo:data_pro_ineq}~\coloredcite{beaudry2011intuitive}, we have
$I(\mathcal{F}_\theta^{0},Z)\leq I(\mathcal{F}_\theta^{1},Z)\leq\cdots\leq I(\mathcal{F}_\theta^{n-2},Z)\leq I(\mathcal{F}_\theta^{n-1},Z)$.
Then Theorem~\ref{theo:markov_benign_features} is achieved.

\begin{figure*}[!thbp]
    \centering
    \subfloat[Original image]{%
        \includegraphics[width=0.22\linewidth]{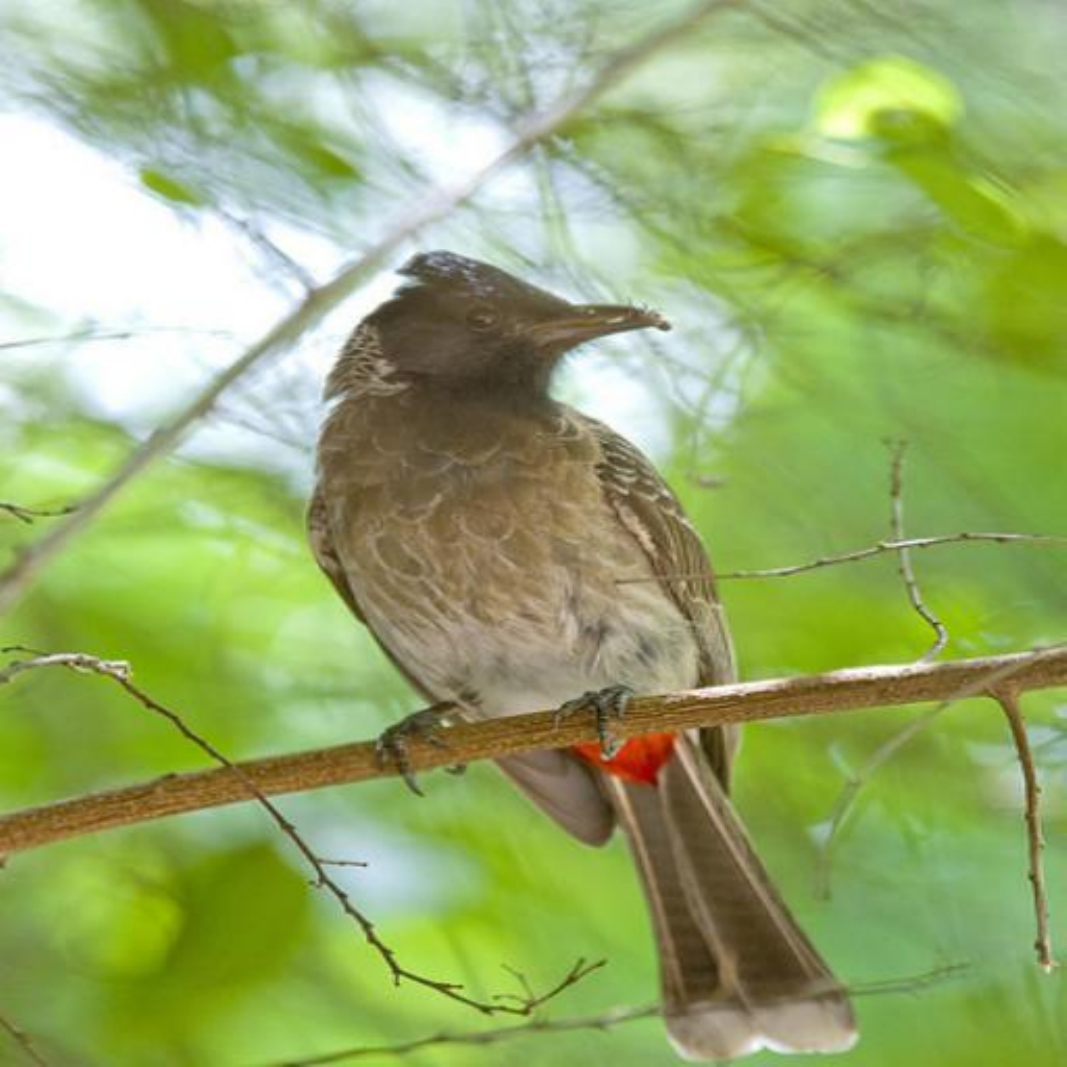}
        \label{fig:original_image}
    }\hfill
    \subfloat[Horizontal flipping]{%
        \includegraphics[width=0.22\linewidth]{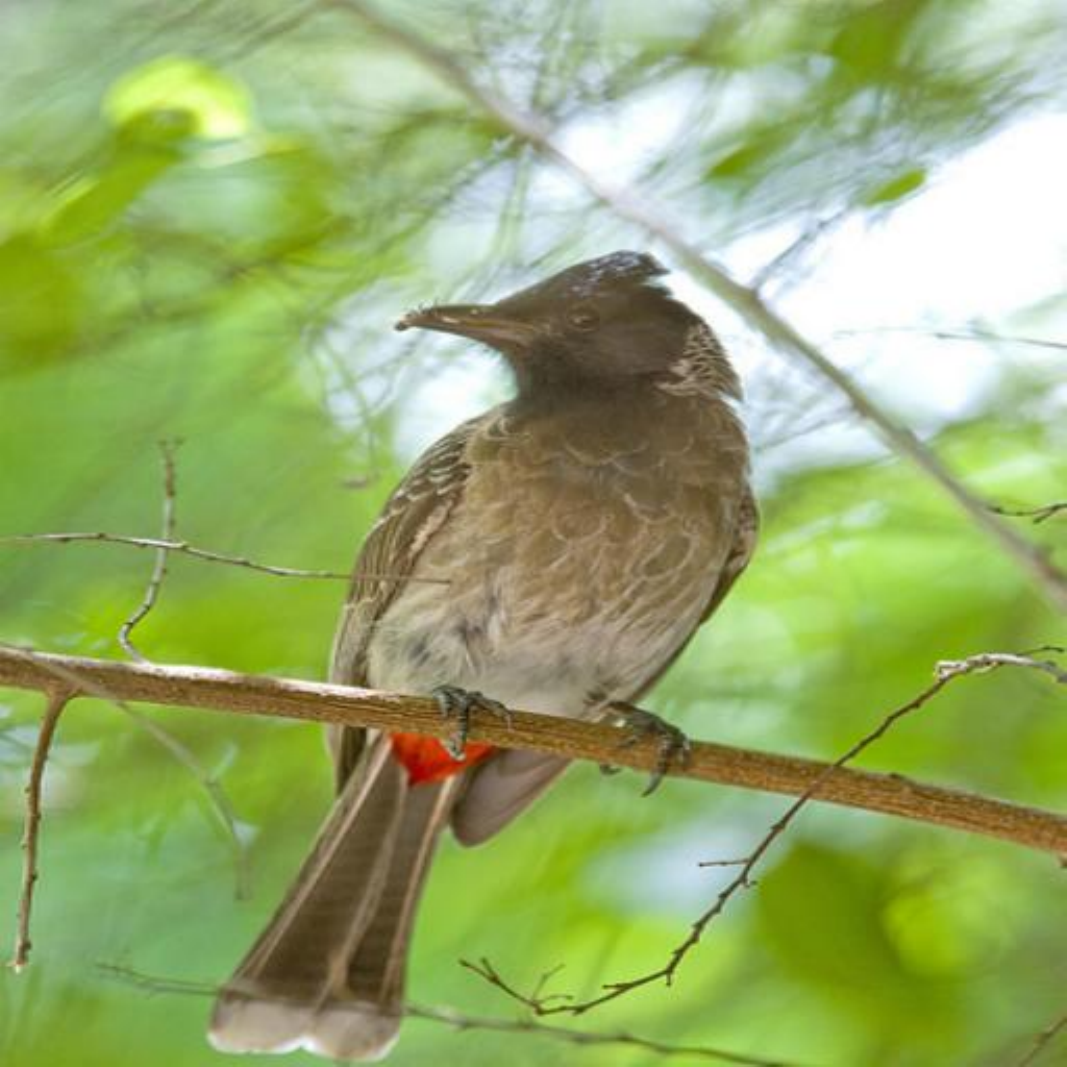}
        \label{fig:flip}
    }\hfill
    \subfloat[Color jittering]{%
        \includegraphics[width=0.22\linewidth]{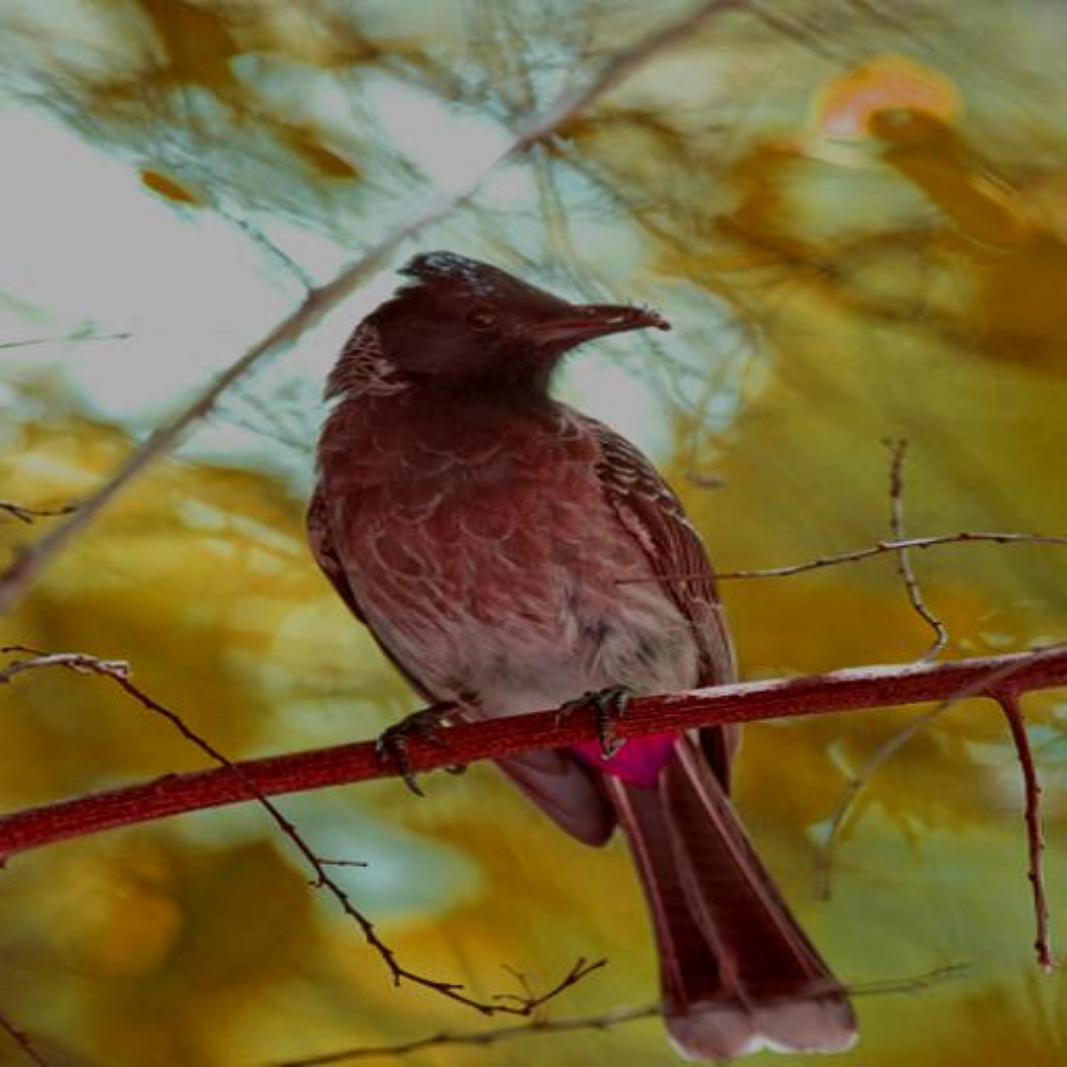}
        \label{fig:jitter}
    }\hfill
    \subfloat[Grayscale conversion]{%
        \includegraphics[width=0.22\linewidth]{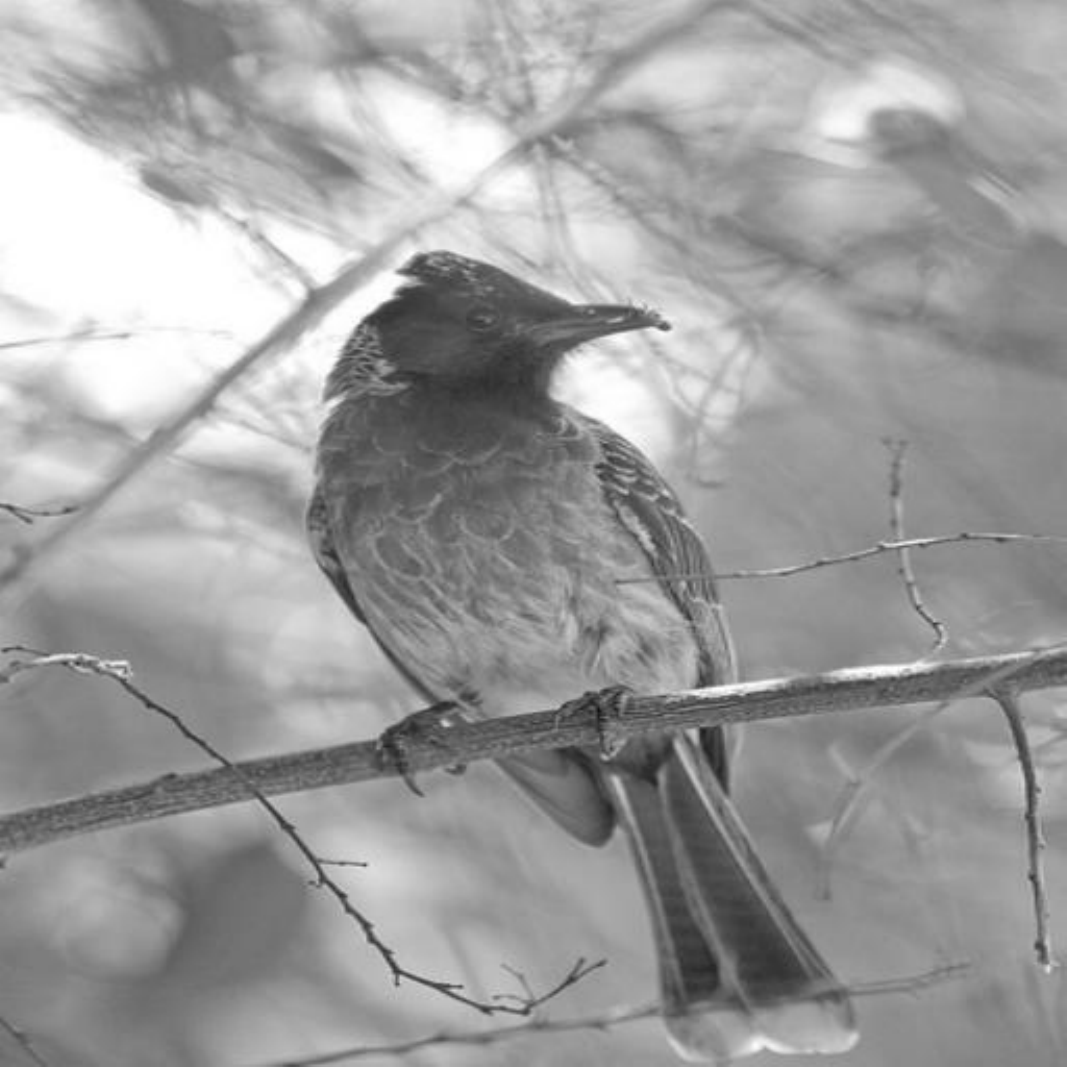}
        \label{fig:gray}
    }
     \caption{\revise{Illustration of the augmentation methods used in our study, including horizontal flipping (Figure~\ref{fig:flip}), color jittering (Figure~\ref{fig:jitter}), and grayscale conversion (Figure~\ref{fig:gray}).}}
    \label{fig:diverse_augmentation}
    \vspace{-12pt}
\end{figure*}

{
    \small
    \bibliographystyle{IEEEtran}
    \bibliography{reference}
}

\begin{IEEEbiography}[{\includegraphics[width=1in,height=1.25in,clip,keepaspectratio]{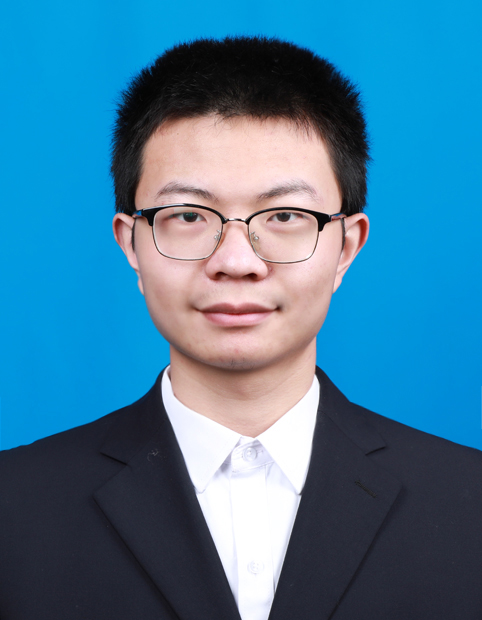}}]{Tingxu Han}
is currently working toward the Ph.D. degree in Software Institute at Nanjing University, Nanjing, China.
His research interest includes AI security and adversarial training.
\end{IEEEbiography}

\begin{IEEEbiography}[{\includegraphics[width=1in,height=1.25in,clip,keepaspectratio]{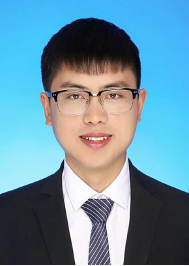}}]{Weisong Sun} is currently a research fellow at the College of Computing and Data Science, Nanyang Technological University, Singapore. He received a Ph.D. degree in Software Engineering from Nanjing University, China in 2023. His research interests include intelligent software engineering, trustworthy artificial intelligence (especially AI model security), and research spanning both fields (e.g., trustworthy intelligent software engineering). He has more than 35 high-quality publications including TIFS, TDSC, TSE, TOSEM, ICSE, ESEC/FSE, ASE, ISSTA, ACL, etc. He served as the reviewer of ICSE, ASE, TSE, TOSEM, ACL, TR, IJHC, QRS, etc. In addition, he served as the co-chair of the International Workshop on AI Reliability and Security (AIRS 2024).
\end{IEEEbiography}

\begin{IEEEbiography}
[{\includegraphics[width=1in,height=1.25in,clip,keepaspectratio]{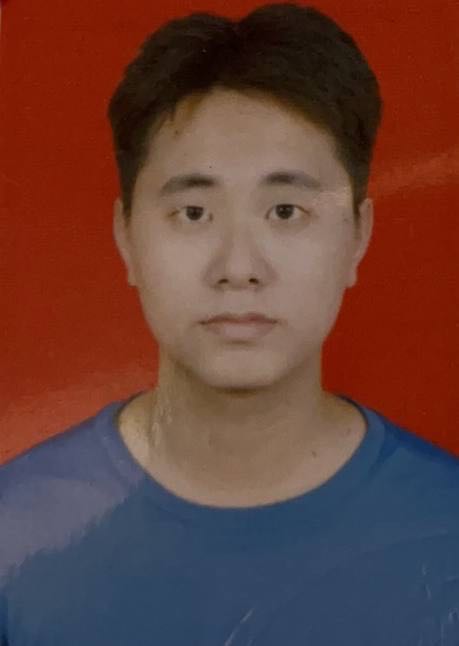}}]
{Ziqi Ding}
is studying for a master’s degree in the School of Computer Science and Engineering, University of New South Wales. His research interests involve AI Security.
\end{IEEEbiography}

\begin{IEEEbiography}[{\includegraphics[width=1in,height=1.25in,clip,keepaspectratio]{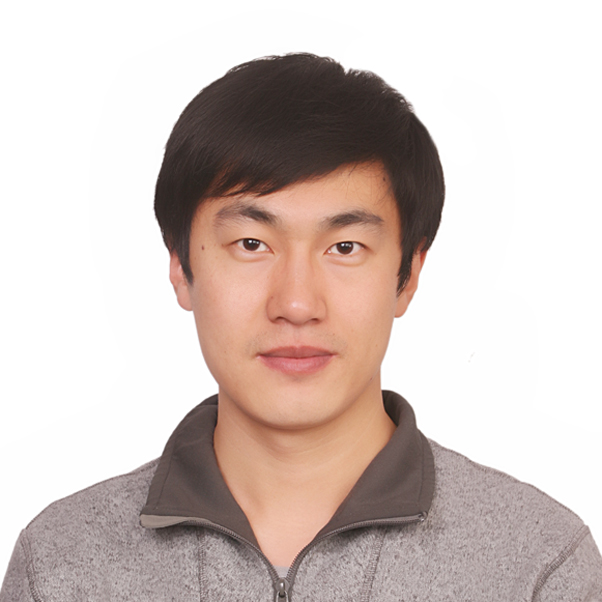}}]{Chunrong Fang}
received the B.E. and Ph.D. degrees in software engineering from Software Institute, Nanjing University, Jiangsu, China. He is currently an assistant professor with the Software Institute of Nanjing University. His research interests lie in intelligent software engineering, e.g. BigCode and AITesting.
\end{IEEEbiography}

\begin{IEEEbiography}[{\includegraphics[width=1in,height=1.25in,clip,keepaspectratio]{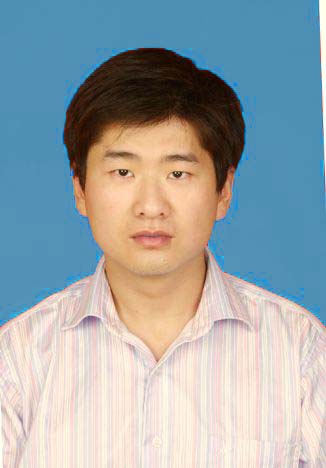}}]{Hanwei Qian}
is a Ph.D. candidate in the Software Institute at Nanjing University, Nanjing, China. His research interests lie in intelligent software engineering and the security of artificial intelligence (AI) models.
\end{IEEEbiography}

\begin{IEEEbiography}[{\includegraphics[width=1in,height=1.25in,clip,keepaspectratio]{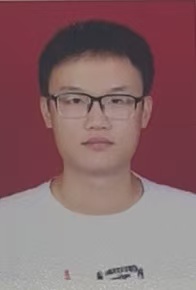}}]{Jiaxun Li}
is studying for a master's degree in the School of Mathematical Sciences at Soochow University. His research direction is Lie groups and Lie algebra
\end{IEEEbiography}

\begin{IEEEbiography}[{\includegraphics[width=1in,height=1.25in,clip,keepaspectratio]{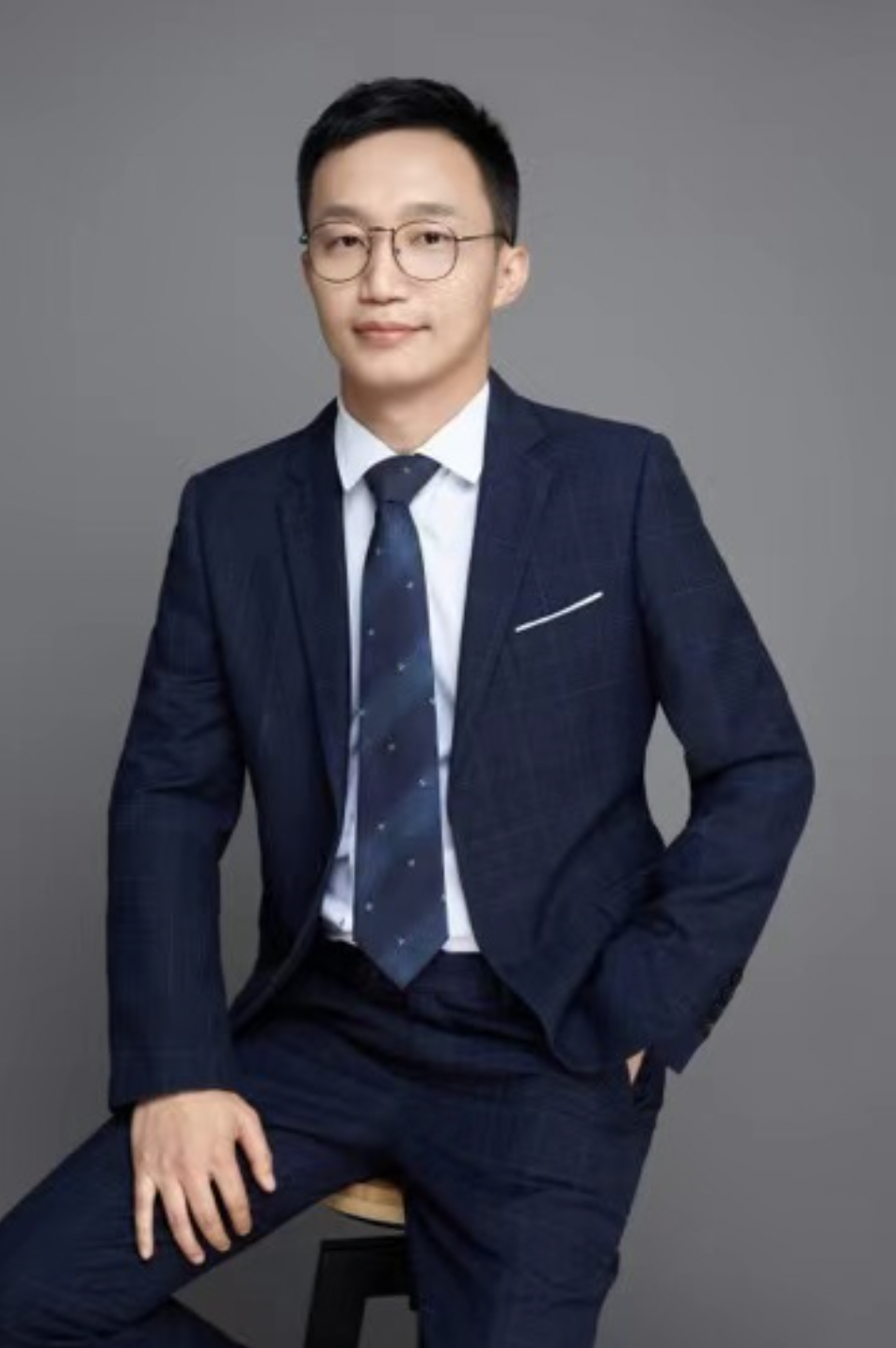}}]{Zhenyu Chen}
is currently a full professor with Software Institute of Nanjing University. He is an associate Editor of IEEE Transactions on Reliability. He is also the Contest Co-Chair at QRS 2018, ICST 2019, and ISSTA 2019. He is the Industrial Track Co-Chair of SANER 2019. His research interests include collective intelligence, deep learning testing and optimization, big data quality, and mobile application testing.
\end{IEEEbiography}

\begin{IEEEbiography}[{\includegraphics[width=1in,height=1.25in,clip,keepaspectratio]{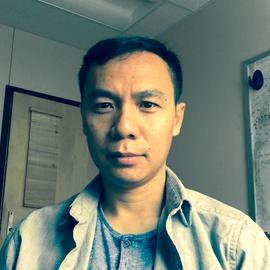}}]{Xiangyu Zhang}
is a professor specializing in AI security, software analysis and cyber forensics. His work involves developing techniques to detect bugs, including security vulnerabilities, in traditional software systems as well as AI models and systems, and to diagnose runtime failures. He has served as the Principal Investigator (PI) for numerous projects funded by organizations such as DARPA, IARPA, ONR, NSF, AirForce, and industry. Many of the techniques developed by his team have successfully transitioned into practical applications. His research outcome has been published on top venues in the areas of Security, AI, Software Engineering, and Programming Languages and recognized by various distinguished paper awards, including the prestigious ACM Distinguished Dissertation Awards. He has mentored over 30 PhD students and post-docs, with fifteen of them securing academic positions in various universities. Many of them have been honored with NSF Career Awards or comparable recognitions.
\end{IEEEbiography}

\vfill

\end{document}